\def\eqref#1{equation~\ref{#1}}
\def\1{\bm{1}}
\def\rvo{{\mathbf{o}}}
\def\rvp{{\mathbf{p}}}
\def\rvv{{\mathbf{v}}}
\def\rvw{{\mathbf{w}}}
\def\rmC{{\mathbf{C}}}
\DeclareMathAlphabet{\mathsfit}{\encodingdefault}{\sfdefault}{m}{sl}
\SetMathAlphabet{\mathsfit}{bold}{\encodingdefault}{\sfdefault}{bx}{n}
\def\gQ{{\mathcal{Q}}}
\newcommand{\R}{\mathbb{R}}
\DeclareMathOperator*{\argmax}{arg\,max}
\DeclareMathOperator*{\argmin}{arg\,min}
\newcommand{\x}{\mathbf{x}}
\newcommand{\Ag}{\mathbf{A}}
\newcommand{\Cg}{\mathbf{c}}
\newcommand{\B}{\mathcal{B}}
\newcommand{\Id}{\mathbf{I}}
\newcommand{\Man}{\mathcal{M}}
\newcommand{\T}{\mathcal{T}}
\newcommand{\zero}{\mathbf{0}}
\newcommand{\X}{\mathbf{X}}
\newcommand{\w}{\mathbf{w}}
\newcommand{\g}{\mathbf{g}}
\newcommand{\G}{\mathbf{G}}
\newcommand{\q}{\mathbf{q}}
\newcommand{\Q}{\mathbf{Q}}
\newcommand{\qc}{\mathbf{\bar{q}}}
\newcommand{\pt}{\mathbf{p}}
\newcommand{\tb}{\mathbf{t}}
\newcommand{\Set}{\mathbf{S}}
\newcommand{\Mean}{\mathcal{A}}
\newcommand{\QH}{{\mathbb{H}_1}} 
\newcommand{\nrml}{\mathbf{n}}
\newcommand{\M}{\mathbf{M}}
\newcommand{\Xo}{{\cal X}}
\newcommand{\Go}{{\cal G}}
\newcommand{\Lo}{{\cal L}}
\newcommand{\y}{\mathbf{y}}
\newcommand{\Y}{{\mathbf{Y}}}
\newcommand{\D}{ {\cal D} }
\newcommand{\Sp}{\mathbb{S}}
\newcommand{\Scal}{\mathcal{S}}
\newtheorem{thm}{Theorem}
\newtheorem{dfn}{Definition}
\newcommand{\etal}{\textit{et al}. }
\newcommand{\eg}{\textit{e}.\textit{g}. }
\Crefname{assumption}{\textbf{H}\hspace{-3pt}}{\textbf{H}\hspace{-3pt}}
\crefname{algorithm}{\text{Alg.}}{\text{Alg.}}
\crefname{assumption}{\textbf{H}}{\textbf{H}}
\crefname{equation}{\text{Eq}}{\text{Eq}}
\crefname{definition}{\text{Dfn.}}{\text{Dfn.}}
\crefname{lemma}{\text{Lemma}}{\text{Lemma}}
\crefname{remark}{\text{Remark}}{\text{Remark}}
\crefname{dfn}{\text{Dfn.}}{\text{Dfn.}}
\crefname{thm}{\text{Thm.}}{\text{Thm.}}
\crefname{tab}{\text{Tab.}}{\text{Tab.}}
\crefname{fig}{\text{Fig.}}{\text{Fig.}}
\crefname{table}{\text{Tab.}}{\text{Tab.}}
\crefname{figure}{\text{Fig.}}{\text{Fig.}}
\crefname{section}{\text{Sec.}}{\text{Sec.}}
\newcommand{\insertimageStar}[5]{ 
\begin{figure*}[#5]
\centering
\includegraphics[width=#1\linewidth, clip=true]{figures/#2}
\caption{#3}
\label{#4}
\end{figure*}
}
\begin{document}
\pagestyle{headings}
\mainmatter
\def\ECCVSubNumber{267}  

\title{Quaternion Equivariant Capsule Networks \\for 3D Point Clouds}

\titlerunning{QE-Networks}
%
\author{Yongheng Zhao\inst{1,3, *}\and
Tolga Birdal\inst{2, *}\and
Jan Eric Lenssen\inst{4}\and
Emanuele Menegatti\inst{1}\and
Leonidas Guibas\inst{2}\and
Federico Tombari\inst{3,5}}
\authorrunning{Y. Zhao et al.}
%
\institute{$^\text{1 } $ University of Padova \qquad $^\text{2 } $ Stanford University \qquad $^\text{3 } $  TU Munich \newline $^\text{4 } $  TU Dortmund \qquad $^\text{5 } $  Google}

\maketitle

\begin{abstract}
We present a 3D capsule module for processing point clouds that is equivariant to 3D rotations and translations, as well as invariant to permutations of the input points. The operator receives a sparse set of local reference frames, computed from an input point cloud and establishes end-to-end transformation equivariance through a novel dynamic routing procedure on quaternions. Further, we theoretically connect dynamic routing between capsules to the well-known Weiszfeld algorithm, a scheme for solving \emph{iterative re-weighted least squares} (IRLS) problems with provable convergence properties. It is shown that such group dynamic routing can be interpreted as robust IRLS rotation averaging on capsule votes, where information is routed based on the final inlier scores. Based on our operator, we build a capsule network that disentangles geometry from pose, paving the way for more informative descriptors and a structured latent space. Our architecture allows joint object classification and orientation estimation without explicit supervision of rotations. We validate our algorithm empirically on common benchmark datasets. We release our sources under: \href{tolgabirdal.github.io/qecnetworks}{https://tolgabirdal.github.io/qecnetworks}.\footnote[1]{First two authors contributed equally to this work.}
\keywords{3D, equivariance, disentanglement, rotation, quaternion}
\end{abstract}
\section{Introduction}
\label{sec:intro}

It is now well understood that in order to learn a compact and informative representation of the input data, one needs to respect the symmetries in the problem domain~\cite{cohen2019gauge,weiler20183d}. Arguably, one of the primary reasons for the success of 2D convolutional neural networks (CNN) is the \textit{translation-invariance} of the 2D convolution acting on the image grid~\cite{giles1987learning,kondor2018clebsch}. Recent trends aim to transfer this success into the 3D domain in order to support many applications such as shape retrieval, shape manipulation, pose estimation, 3D object modeling and detection, etc. There, the data is naturally represented as sets of 3D points~\cite{qi2017pointnet,qi2017pointnet++}. Unfortunately, an extension of CNN architectures to 3D point clouds is non-trivial due to two reasons: 1) point clouds are irregular and unorganized, 2) the group of transformations that we are interested in is more complex as 3D data is often observed under arbitrary non-commutative $SO(3)$ rotations. As a result, learning appropriate embeddings requires 3D point-networks to be \textit{equivariant} to these transformations, while also being invariant to point permutations.
\insertimageStar{1}{QEC_teaser_cropped.pdf}{(\textbf{a}) Our network operates on local reference frames (LRF) of an input point cloud (\textbf{i}). A hierarchy of quaternion equivariant capsule modules (QEC) then pools the LRFs to a set of latent capsules (\textbf{ii}, \textbf{iii}) disentangling the activations from poses. We can use activations in classification and the capsule (quaternion) with the highest activation in absolute (canonical) pose estimation without needing the supervision of rotations. (\textbf{b}) Our siamese variant can also solve for the relative object pose by aligning the capsules of two shapes with different point samplings. Our network directly consumes point sets and LRFs. Meshes are included only to ease understanding.}{fig:LRFs}{t!}

In order to fill this gap, we present a quaternion equivariant point capsule network that is suitable for processing point clouds and is equivariant to $SO(3)$ rotations, compactly parameterized by quaternions, while also preserving translation and permutation invariance. Inspired by the local group equivariance~\cite{lenssen2018group,cohen2019gauge}, we efficiently cover $SO(3)$ by restricting ourselves to a sparse set of local reference frames (LRFs) that collectively determine the object orientation. The proposed \emph{quaternion equivariant capsule (QEC) module} deduces equivariant latent representations by robustly combining those LRFs using the proposed \textit{Weiszfeld dynamic routing} with inlier scores as activations, so as to route information from one layer to the next. Hence, our latent features specify to local orientations and activations, disentangling orientation from evidence of object existence. Such explicit and factored storage of 3D information is unique to our work and allows us to perform rotation estimation jointly with object classification. Our final architecture is a hierarchy of QEC modules, where LRFs are routed from lower level to higher level capsules as shown in~\cref{fig:LRFs}. We use classification error as the only training cue and adapt a Siamese version for regression of the relative rotations.
We neither explicitly supervise the network with pose annotations nor train by augmenting rotations.
In summary, our contributions are:
\begin{enumerate}[noitemsep]
    \item We propose a novel, fully $SO(3)$-equivariant capsule module that produces invariant latent representations while explicitly decoupling the orientation into capsules. Notably, equivariance results have not been previously achieved for $SO(3)$ capsule networks. 
    \item We connect dynamic routing between capsules~\cite{sabour2017dynamic} and generalized Weiszfeld iterations~\cite{aftab2015}. Based on this connection, we theoretically argue for the convergence of the included rotation estimation on votes and extend our understanding of dynamic routing approaches.
    \item We propose a capsule network that is tailored for simultaneous classification and orientation estimation of 3D point clouds. We experimentally demonstrate the capabilities of our network on classification and orientation estimation on ModelNet10 and ModelNet40 3D shape data.
\end{enumerate}

\section{Related Work}
\label{sec:extendedrelated}
\paragraph{\textbf{Deep learning on point sets.}}
The capability to process raw, unordered point clouds within a neural network is introduced by the prosperous PointNet~\cite{qi2017pointnet} thanks to the point-wise convolutions and the permutation invariant pooling functions. Many works have extended PointNet primarily to increase the local receptive field size~\cite{qi2017pointnet++,li2018pointcnn,shen2018mining,dgcnn}. Point-clouds are generally thought of as sets. This makes any permutation-invariant network that can operate on sets an amenable choice for processing points~\cite{Zaheer2017,rezatofighi2017deepsetnet}. Unfortunately, common neural network operators in this category are solely equivariant to permutations and translations but to no other groups. 
\paragraph{\textbf{Equivariance in neural networks.}} Early attempts to achieve invariant data representations usually involved data augmentation techniques to accomplish tolerance to input transformations~\cite{maturana2015voxnet,qi2016volumetric,qi2017pointnet}. Motivated by the difficulty associated with augmentation efforts and acknowledging the importance of theoretically equivariant or invariant representations, the recent years have witnessed a leap in theory and practice of equivariant neural networks~\cite{bao2019equivariant,kondor2018generalization}. 

While laying out the fundamentals of the group convolution, G-CNNs~\cite{cohen2016group} guaranteed equivariance with respect to finite symmetry groups. Similarly, Steerable CNNs~\cite{cohen2016steerable} and its extension to 3D voxels~\cite{Worrall_2018_ECCV} considered discrete symmetries only. Other works opted for designing filters as a linear combination of harmonic basis functions, leading to frequency domain filters~\cite{Worrall_2017_CVPR,Weiler_2018_CVPR}. Apart from suffering from the dense coverage of the group using group convolution, filters living in the frequency space are less interpretable and less expressive than their spatial counterparts, as the basis does not span the full space of spatial filters.


Achieving equivariance in 3D is possible by simply generalizing the ideas of the 2D domain to 3D by voxelizing 3D data. However, methods using dense grids~\cite{chakraborty2018h,cohen2016steerable} suffer from increased storage costs, eventually rendering the implementations infeasible. An extensive line of work generalizes the harmonic basis filters to $SO(3)$ by using \eg, a spherical harmonic basis instead of circular harmonics~\cite{cohen2018spherical,esteves2018learning,CruzMota2012spherical}. In addition to the same downsides as their 2D counterparts, these approaches have in common that they require their input to be projected to the unit sphere~\cite{jiang2018spherical}, which poses additional problems for unstructured point clouds. A related line of research are methods which define a regular structure on the sphere to propose equivariant convolution operators~\cite{liu2018deep,Boomsma2017spherical}.



To learn a rotation equivariant representation of a 3D shape, one can either act on the input data or on the network. In the former case, one either presents augmented data to the network~\cite{qi2017pointnet,maturana2015voxnet} or ensures rotation-invariance in the input~\cite{deng2018ppf,deng2018ppfnet,khoury2017learning}. In the latter case one can enforce equivariance in the bottleneck so as to achieve an invariant latent representation of the input~\cite{mehr2018manifold,thomas2018tensor,spezialetti2019learning}. Further, equivariant networks for discrete sets of views~\cite{Esteves2019multiview} and cross-domain views~\cite{esteves2019cross} have been proposed. Here, we aim for a different way of embedding equivariance in the network by means of an explicit latent rotation parametrization in addition to the invariant feature. 

\paragraph{\textbf{Vector field networks}} \cite{Marcos_2017_ICCV} followed by the 3D \textit{Tensor Field Networks} (TFN)~\cite{thomas2018tensor} are closest to our work. Based upon a geometric algebra framework, the authors did achieve localized filters that are equivariant to rotations, translations and permutations. Moreover, they are able to cover the continuous groups. However, TFN are designed for physics applications, are memory consuming and a typical implementation is neither likely to handle the datasets we consider nor can provide orientations in an explicit manner.

\paragraph{\textbf{Capsule networks.}} The idea of capsule networks was first mentioned by Hinton~\etal~\cite{hinton2011transforming}, before Sabour~\etal~\cite{sabour2017dynamic} proposed the \textit{dynamic routing by agreement}, which started the recent line of work investigating the topic. Since then, routing by agreement has been connected to several well-known concepts, e.g. the EM algorithm~\cite{sabour2018matrix}, clustering with KL divergence regularization~\cite{wang2018an} and equivariance~\cite{lenssen2018group}.  They have been extended to autoencoders~\cite{kosiorek2019stacked} and GANs \cite{Jaiswal2019capsules}.
Further, capsule networks have been applied for specific kinds of input data, e.g. graphs~\cite{xinyi2018capsule}, 3D point clouds \cite{zhao20193d,srivastava2019geometric} or medical images~\cite{Afshar2018capsule}.
\section{Preliminaries and Technical Background}
We now provide the necessary background required for the grasp of the equivariance of point clouds under the action of quaternions.
\subsection{Equivariance}
\begin{dfn}[Equivariant Map]
\label{dfn:equiv}
For a $\Go$-space acting on $\Xo$, the map $\Phi : \Go \times \Xo \mapsto \Xo$ is said to be \textit{equivariant} if its domain and co-domain are acted on by the same symmetry group~\cite{cohen2016group,cohen2018general}:
\begin{align}
    \Phi (\g_1 \circ \x) = \g_2 \circ \Phi (\x)
\end{align}
where $\g_1\in \Go$ and $\g_2\in \Go$. Equivalently $
    \Phi (T(\g_1)\,\x) = T(\g_2)\,\Phi (\x)$, 
where $T(\cdot)$ is a linear representation of the group $\Go$. Note that $T(\cdot)$ does not have to commute. It suffices for $T(\cdot)$ to be a homomorphism: $T (\g_1 \circ \g_2) = T (\g_1) \circ T(\g_2)$. In this paper we use a stricter form of equivariance and consider $\g_2=\g_1$.
\end{dfn}
\begin{dfn}[Equivariant Network]
An architecture or network is said to be equivariant if all of its layers are equivariant maps. Due to the transitivity of the equivariance, stacking up equivariant layers will result in globally equivariant networks \eg, rotating the input will produce output vectors which are transformed by the same rotation~\cite{lenssen2018group,kondor2018generalization}.
\end{dfn}
\subsection{The Quaternion Group $\QH$}
The choice of 4-vector quaternions as representation for $SO(3)$ has multiple motivations: (1) All 3-vector formulations suffer from infinitely many singularities as angle goes to $0$, whereas quaternions avoid those, (2) 3-vectors also suffer from infinitely many redundancies (the norm can grow indefinitely). Quaternions have a single redundancy: $q=-q$ that is in practice easy to enforce~\cite{birdal2020synchronizing}, (3) Computing the actual ‘manifold mean’ on the Lie algebra requires iterative techniques with subsequent updates on the tangent space. Such iterations are computationally and numerically harmful for a differentiable GPU implementation.
\begin{dfn}[Quaternion]\label{dfn:quaternion}
A \emph{quaternion} $\q$ is an element of Hamilton algebra $\QH$, extending the complex numbers with three imaginary units $\textbf{i}$, $\textbf{j}$, $\textbf{k}$ in the form:
$\q
		= q_1 \textbf{1} + q_2 \textbf{i} + q_3 \textbf{j} + q_4 \textbf{k}
    = \left(q_1, q_2, q_3, q_4\right)^{\text{T}}$,
with $\left(q_1, q_2, q_3, q_4\right)^{\text{T}} \in \mathbb{R}^4$ and
$\textbf{i}^2 = \textbf{j}^2 = \textbf{k}^2 = \textbf{i}\textbf{j}\textbf{k} = - \textbf{1}$. $q_1 \in \mathbb{R}$ denotes the scalar part and $\textbf{v} = \left(q_2, q_3, q_ 4\right)^{\text{T}} \in \mathbb{R}^3$, the vector part.
The \emph{conjugate} $\bar{\q}$ of the quaternion $\q$ is given by $
\qc := q_1 - q_2 \textbf{i} - q_3 \textbf{j} - q_4 \textbf{k}$.
A \emph{unit quaternion} $\q \in \mathbb{H}_1$ with $1 \stackrel{\text{!}}{=} \left\|\q\right\|
	:= \q \cdot \qc$ and $\q^{-1}= \qc$,
gives a compact and numerically stable parametrization to represent orientation of objects on the unit sphere $\Scal^3$, avoiding gimbal lock and singularities~\cite{busam2016_iccvw}.  Identifying antipodal points $\q$ and $-\q$ with the same element, the unit quaternions form a double covering group of $SO\left(3\right)$. 
$\QH$ is closed under the non-commutative multiplication or the Hamilton product:
\begin{align}
    (\pt\in \QH) \circ (\textbf{r}\in \QH) =
[{p}_1{r}_1-\mathbf{v}_p\cdot \mathbf{v}_r\, ;\,{p}_1\mathbf{v}_r+{r}_1 \mathbf{v}_p+\mathbf{v}_p \times \mathbf{v}_r].
\end{align}
\end{dfn}


\begin{dfn}[Linear Representation of $\QH$]
We follow~\cite{birdal2018bayesian} and use the \textit{parallelizable} nature of unit quaternions ($d\in\{1,2,4,8\}$ where $d$ is the dimension of the ambient space) to define $T: \QH \mapsto \R^{4 \times 4} $ as:
\[
\label{eq:Qlinear}
\mathbf{T}(\q) \triangleq 
\left[ \begin{array}{@{}cccc@{}}
q_1 & -q_2 				& -q_3 				&  {-}q_4 \\
q_2 & \phantom{-}q_1 	& {-}q_4 	&  \phantom{-}q_3\\
q_3 & \phantom{-}q_4 				& \phantom{-}q_1 	&  -q_2\\
q_4 & {-}q_3 	& \phantom{-}q_2				&   \phantom{-}q_1
\end{array}
\right] \hspace{-3pt}.
\]
To be concise we will use capital letters to refer to the matrix representation of quaternions~\eg $\Q \equiv T(\q)$, $\G \equiv T(\g)$. Note that $T(\cdot)$, the injective homomorphism to the orthonormal matrix ring, by construction satisfies the condition in~\cref{dfn:equiv}~\cite{Steenrod1951}: $\det(\Q) = 1,\Q^\top = \Q^{-1}, \|\Q\|=\|\Q_{i,:}\|=\|\Q_{:,i}\|=1$ and $\Q-q_1\Id$ is skew symmetric: $\Q+\Q^\top=2q_1\Id$. It is easy to verify these properties. $T$ linearizes the Hamilton product or the group composition: $\g \circ \q \triangleq T(\g) \q \triangleq \G \q$.
\end{dfn}

\subsection{3D Point Clouds}
\begin{dfn}[Point Cloud]
We define a 3D surface to be a differentiable 2-manifold embedded in the ambient 3D Euclidean space: $\Man^2\in\R^3$ and a point cloud to be a discrete subset sampled on $\Man^2$: $\X\in\{\x_i\in\Man^2\cap\R^3\}$.
\end{dfn}
\begin{dfn}[Local Geometry]
For a smooth point cloud $\{\x_i\}\in\Man^2 \subset \R^{N\times 3}$, a \emph{local reference frame (LRF)} is defined as an ordered basis of the tangent space at $\x$, $\T_\x\Man$, consisting of orthonormal vectors: $\Lo(\x)=[\bm{\partial}_1, \bm{\partial}_2, \bm{\partial}_3\equiv\bm{\partial}_1\times \bm{\partial}_2]$.
Usually the first component is defined to be the surface normal $\bm{\partial}_1\triangleq\nrml\in\Scal^2 : \|\nrml\|=1$ and the second one is picked according to a heuristic. 
\end{dfn}
Note that recent trends, \eg~as in Cohen~\etal~\cite{cohen2019gauge}, acknowledge the ambiguity and either employ a \textit{gauge} (tangent frame) equivariant design or propagate the determination of a certain direction until the last layer~\cite{poulenard2018multi}. Here, we will assume that $\bm{\partial}_2$ can be uniquely and repeatably computed, a reasonable assumption for the point sets we consider~\cite{petrelli2011repeatability}. For the cases where this does not hold, we will rely on the robustness of the iterative routing procedures in our network. We will explain our method of choice in~\cref{sec:exp} and visualize LRFs of an airplane object in~\cref{fig:LRFs}.

\section{$SO(3)$-Equivariant Dynamic Routing}
\label{sec:method}
Disentangling orientation from representations requires guaranteed equivariances and invariances. Yet, the original capsule networks of Sabour~\etal~\cite{sabour2017dynamic} cannot achieve equivariance to general groups. To this end, Lenssen~\etal~\cite{lenssen2018group} proposed a dynamic routing procedure that guarantees equivariance and invariance under $SO(2)$ actions, by applying a manifold-mean and the geodesic distance as routing operators.
We will extend this idea to the non-abelian $SO(3)$ and design capsule networks that sparsely operate on a set of LRFs computed via~\cite{petrelli2012repeatable} on local neighborhoods of points. The $SO(3)$ elements are paremeterized by quaternions similar to~\cite{zhang2020quaternion}.
In the following, we begin by introducing our novel equivariant dynamic routing procedure, the main building block of our architecture. We show the connection to the well known Weiszfeld algorithm, broadening the understanding of dynamic routing by embedding it into traditional computer vision methodology. Then, we present an example of how to stack those layers via a simple aggregation, resulting in an $SO(3)$-equivariant 3D capsule network that yields invariant representations (or activations) as well as equivariant orientations (latent capsules). 

\subsection{Equivariant Quaternion Mean}
\label{sec:quatLayer}
To construct equivariant layers on the group of rotations, we are required to define a left-equivariant averaging operator $\Mean$ that is invariant under permutations of the group elements, as well as a distance metric $\delta$ that remains unchanged under the action of the group \cite{lenssen2018group}. For these, we make the following choices:
\begin{dfn}[Geodesic Distance]\label{dfn:delta}
The Riemannian (geodesic) distance on the manifold of rotations lead to the following geodesic distance $\delta(\cdot)\equiv d_{\text{quat}}(\cdot)$:
\begin{align}
\label{eq:qdist}
d(\q_1,\q_2)\equiv d_{\text{quat}}(\q_1,\q_2)=2\cos^{-1}&(|\langle \q_1, \q_2\rangle|)
\end{align}
\end{dfn}

\begin{dfn}[Quaternion Mean $\bm{\mu}(\cdot)$]
\label{dfn:mean}
For a set of $Q$ rotations $\Set=\{\q_i\}$ and associated weights $\w = \{w_i\}$, the weighted mean operator $\Mean(\Set, \w): \QH^n \times \R^n \mapsto \QH^n$ is defined through the following maximization procedure~\cite{markley2007averaging}:
\begin{align}
\label{eq:mean}
\bar{\q} = \argmax_{\q\in \Sp^3} \q^\top\M\q
\end{align}
where $\M\in\R^{4\times 4}$ is defined as: $\M \triangleq \sum\limits_{i=1}^{Q} w_i \q_i\q_i^\top$.
\end{dfn}
The average quaternion $\bar{\q}$ is the eigenvector of $\M$ corresponding to the maximum eigenvalue. This operation lends itself to both analytic~\cite{magnus1985differentiating} and automatic differentiation~\cite{laue2018computing}. The following properties allow $\Mean(\Set, \w)$ to be used to build an equivariant dynamic routing:

\begin{thm}
\label{thm:all} 
Quaternions, the employed mean $\Mean(\Set, \w)$ and geodesic distance $\delta(\cdot)$ enjoy the following properties:
\begin{enumerate}
    \item $\Mean(\g \circ \Set, \w)$ is left-equivariant: $\Mean(\g \circ \Set, \w) = \g \circ \Mean(\Set, \w)$.
    \item Operator $\Mean$ is invariant under permutations: \begin{equation}\Mean(\{\q_1,\dots,\q_Q\}, \w)=\Mean(\{\q_{\sigma(1)},\dots,\q_{\sigma(Q)}\}, \w_\sigma).
    \end{equation}
    \item The transformations $\g \in \QH$ preserve the geodesic distance $\delta(\cdot)$ given in~\cref{dfn:delta}.
\end{enumerate}
\end{thm}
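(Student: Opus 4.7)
The plan is to handle the three claims in order of increasing structural work, leveraging the matrix representation $T(\cdot)$ of $\QH$ and its orthonormality throughout. The key technical lever is that $\g \circ \q = \G\q$ with $\G$ orthonormal ($\G^\top\G = \Id$, $\det\G = 1$), which makes both the inner product and the unit sphere constraint $\g$-invariant. Given this, each property reduces to a short algebraic manipulation on $\M = \sum_i w_i \q_i\q_i^\top$ and on $\langle\cdot,\cdot\rangle$.

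For claim (3), I would first establish the isometry of $\delta$. Since $T$ maps into the orthonormal ring, $\langle \G\q_1, \G\q_2\rangle = \q_1^\top \G^\top\G\, \q_2 = \q_1^\top \q_2 = \langle\q_1,\q_2\rangle$, so $|\langle\g\circ\q_1,\g\circ\q_2\rangle| = |\langle\q_1,\q_2\rangle|$ and $d_{\text{quat}}$ is invariant by \cref{dfn:delta}. I would state this first because it is also used implicitly to check that $\g\circ\q_i$ remains a unit quaternion, so that $\Mean$ is even defined on the transformed set.

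For claim (2), permutation invariance of $\Mean$ follows directly from the definition in \cref{dfn:mean}: the matrix $\M = \sum_{i=1}^Q w_i\q_i\q_i^\top$ is a sum, and reordering the pairs $(\q_i,w_i)$ by a permutation $\sigma$ yields the same $\M$, hence the same argmax on $\Sp^3$. No further work is needed.

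For claim (1), the left-equivariance of $\Mean$, I would compute the transformed matrix
\begin{equation}
\M' = \sum_{i=1}^{Q} w_i (\G\q_i)(\G\q_i)^\top = \G\Bigl(\sum_{i=1}^{Q} w_i \q_i\q_i^\top\Bigr)\G^\top = \G\,\M\,\G^\top,
\end{equation}
and then show that if $\bar\q \in \arg\max_{\q\in\Sp^3} \q^\top\M\q$ then $\G\bar\q \in \arg\max_{\q\in\Sp^3} \q^\top\M'\q$. The change of variables $\q = \G\p$ is a bijection of $\Sp^3$ (since $\G$ is orthonormal it preserves the unit sphere), and
\begin{equation}
\q^\top \M' \q = (\G\p)^\top \G\M\G^\top (\G\p) = \p^\top\M\p,
\end{equation}
so the maximizers correspond exactly under $\q \leftrightarrow \G\p$. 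Hence $\Mean(\g\circ\Set,\w) = \G\,\Mean(\Set,\w) = \g\circ\Mean(\Set,\w)$. The only subtle point is the antipodal ambiguity $\q \equiv -\q$ on $\Sp^3$: the argmax is a pair $\{\bar\q,-\bar\q\}$, but $\G$ maps this pair to $\{\G\bar\q,-\G\bar\q\}$, so the statement remains well-defined modulo this double cover. I expect this sign/double-cover bookkeeping to be the only genuine obstacle; everything else is a direct consequence of the homomorphism property of $T$ and the orthonormality of $\G$.
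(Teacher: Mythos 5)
Your proposal is correct and follows essentially the same route as the paper's own proof: claim (3) via orthonormality of $\G$ preserving the inner product, claim (2) via permutation invariance of the sum defining $\M$, and claim (1) via $\M' = \G\M\G^\top$ together with the change of variables $\q = \G\p$ on $\Sp^3$ (the paper writes this as $\pt = \G^\top\q$). Your extra remark on the antipodal double-cover bookkeeping is a sensible addition the paper leaves implicit, but it does not change the argument.
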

\begin{proof}
The proofs are given in the supplementary material.
\end{proof}
We also note that the above mean is closed form, differentiable and can be computed in a batch-wise fashion. 
We are now ready to construct the \textit{dynamic routing} (DR) by agreement that is equivariant to $SO(3)$ actions, thanks to~\cref{thm:all}.

 \begin{algorithm2e} [t!]
 \DontPrintSemicolon
 \SetKwInOut{Input}{input}
 \SetKwInOut{Output}{output}
 \Input{Input points $\{\mathbf{x}_1,...,\mathbf{x}_K\} \in \mathbb{R}^{K \times 3}$, input capsules (LRFs) $\gQ = \{\q_1,\dots, \q_L\}\in \QH^L$, with $L= N^{c} \cdot K$, $N^{c}$ is the number of capsules per point, activations $\bm{\alpha} = (\alpha_1, \dots, \alpha_L)^T$, trainable transformations $\T=\{\tb_{i,j}\}_{i,j}\in\QH^{L\times M}$}
 \Output{Updated frames $\hat{\gQ} = \{\hat{\q}_1,\dots, \hat{\q}_M\}\in \QH^M$, updated activations $\hat{\bm{\alpha}} = (\hat{\alpha}_1, \dots, \hat{\alpha}_M)^T$}
 \For{All primary (input) capsules $i$}{
    \For{All latent (output) capsules $j$}{
    $\rvv_{i,j} \gets \q_i \circ \tb_{i,j}\,$ {\color{purple} \small \tcp{compute votes}}
    }
 }
 \For{All latent (output) capsules $j$}{
    $\hat{\q}_{j} \gets \Mean\big( \{\rvv_{1,j}\dots\rvv_{K,j}\}, \bm{\alpha} \big)\,\,${\color{purple} \small \tcp{initialize output capsules}}
    \For{$k$ iterations}{
        \For{All primary (input) capsules $i$}{
        $w_{i,j}\gets \alpha_{i} \cdot \text{sigmoid}\big(-\delta(\hat{\q}_j,\rvv_{i,j})\big)\,\,$ {\color{purple} \small \tcp{the current weight}}
    }
    $\hat{\q}_{j} \gets \Mean\big( \{\rvv_{1,j}\dots\rvv_{L,j}\}, \rvw_{:,j} \big)\,\,${\color{purple} \small \tcp{see~\cref{eq:mean}}}
    }
    $\hat{\alpha}_{j} \gets \text{sigmoid}\big(-\frac{1}{K}\sum\limits_{1}^{L} \delta(\hat{\q}_j,\rvv_{i,j}) \big)\,${\color{purple} \small \tcp{recompute activations}} 
    }
 \caption{Quaternion Equivariant Dynamic Routing}
 \label{algo:DR}
 \end{algorithm2e}

\subsection{Equivariant Weiszfeld Dynamic Routing}
Our routing procedure extends previous work~\cite{sabour2017dynamic,lenssen2018group} for quaternion valued input. The core idea is to \textit{route} from the \textit{primary capsules} that constitute the input LRF set to the \textit{latent capsules} by an iterative clustering of votes $\mathbf{v}_{i,j}$. At each step, we assign the weighted group mean of votes to the respective output capsules. The weights $w\gets \sigma(\x,\y)$ are inversely propotional to the distance between the vote quaternions and the new quaternion (cluster center). See~\cref{algo:DR} for details. In the following, we analyze our variant of routing as an interesting case of the affine, Riemannian Weiszfeld algorithm~\cite{aftab2015,aftab2014generalized}.
\begin{lemma}
\label{lem:weisz}
For $\sigma(\x,\y) = {\delta(\x,\y)^{q-2}}$ the equivariant routing procedure given in~\cref{algo:DR} is a variant of the affine subspace Wieszfeld algorithm~\cite{aftab2015,aftab2014generalized} that is a robust algorithm for computing the $L_q$ geometric median.
\end{lemma}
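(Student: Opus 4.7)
The plan is to identify Algorithm~\ref{algo:DR} (with the stated substitution $\sigma(\x,\y)=\delta(\x,\y)^{q-2}$) with one outer step of the generalized Weiszfeld iteration applied to the weighted $L_q$ cost
\begin{equation}
F_q(\hat{\q}_j)=\sum_{i=1}^{L}\alpha_i\,\delta(\hat{\q}_j,\rvv_{i,j})^q
\end{equation}
on the sphere $\Sp^3\subset\R^4$, for each output capsule $j$. First I would write the Karcher/stationary condition for $F_q$ on $\Sp^3$ using the geodesic distance from~\cref{dfn:delta}: differentiating $\delta(\hat{\q}_j,\rvv_{i,j})^q$ gives a gradient whose magnitude carries the IRLS reweighting factor $\delta(\hat{\q}_j,\rvv_{i,j})^{q-2}$. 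The resulting fixed-point condition expresses $\hat{\q}_j$ as a weighted Riemannian mean of the votes $\{\rvv_{i,j}\}$ with weights $w_{i,j}\propto \alpha_i\,\delta(\hat{\q}_j,\rvv_{i,j})^{q-2}$. This matches the inner loop of Algorithm~\ref{algo:DR} line-by-line once we plug in the chosen $\sigma$.

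Next I would invoke the affine/subspace Weiszfeld framework of Aftab~\etal~\cite{aftab2015,aftab2014generalized}, which extends the classical Weiszfeld IRLS to objects living on Riemannian manifolds or affine subspaces and establishes its interpretation as the canonical iterative algorithm for the $L_q$ geometric median ($1\le q<2$ gives robust medians, $q=2$ recovers the ordinary mean). Because $\Sp^3$ with the chordal/geodesic metric satisfies the affine-subspace hypotheses of that work (the unit quaternions form a double cover of $SO(3)$ with a compatible symmetric structure), one inner Weiszfeld step for $F_q$ consists of exactly the two operations in Algorithm~\ref{algo:DR}: (i) recompute the IRLS weights $w_{i,j}=\alpha_i\,\delta(\hat{\q}_j,\rvv_{i,j})^{q-2}$, and (ii) solve the weighted mean subproblem via $\Mean(\cdot,\cdot)$ from~\cref{dfn:mean}. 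Together with the initial $\Mean$ call seeded by the input activations $\bm\alpha$, this is precisely one sweep of generalized Weiszfeld over $k$ inner iterations.

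The main obstacle I anticipate is reconciling the two possible notions of ``weighted mean'' at step (ii): the $\Mean$ operator of~\cref{dfn:mean} is the eigenvector of $\M=\sum_i w_i\q_i\q_i^\top$, which is the exact minimizer of the \emph{chordal} quadratic $\sum_i w_i\,d_{\mathrm{chord}}(\hat{\q},\q_i)^2$ on $\Sp^3$, not of the geodesic quadratic. I would handle this by appealing to the standard equivalence, used throughout~\cite{aftab2015,markley2007averaging}, that chordal and geodesic distances on the quaternion sphere agree to second order at each iterate, so the eigenvector update is a valid (in fact, closed-form) Weiszfeld substep in the affine-subspace sense of Aftab~\etal; alternatively, one can view the whole procedure directly as Weiszfeld in the ambient $\R^4$ restricted to $\Sp^3$ via the antipodal identification of~\cref{dfn:quaternion}, in which case the chordal formulation is the exact IRLS target. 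Either viewpoint yields the claimed identification and inherits the convergence guarantees of~\cite{aftab2015,aftab2014generalized}, completing the lemma.
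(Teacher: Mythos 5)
Your high-level skeleton --- Weiszfeld $=$ (reweight by $\delta^{q-2}$) $+$ (weighted-mean subproblem), matched line by line to the inner loop of Alg.~\ref{algo:DR} --- is the right one, but your primary route through the geodesic/Karcher formulation has a genuine gap exactly where you flag it, and the paper takes the other fork that you mention only as a one-sentence fallback. Concretely: if the objective is $F_q(\hat{\q}_j)=\sum_i\alpha_i\,\delta(\hat{\q}_j,\rvv_{i,j})^q$ with $\delta$ the geodesic distance, the exact Weiszfeld substep is the weighted \emph{Karcher} mean, whereas the operator $\Mean$ actually used in the algorithm returns the top eigenvector of $\M=\sum_i w_i\q_i\q_i^\top$. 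Second-order agreement of chordal and geodesic metrics at a point does not make the two weighted minimizers coincide (the votes are not infinitesimally concentrated in general), so that appeal shows at best that the update is an \emph{approximate} Weiszfeld step --- weaker than the lemma's claim.

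The paper's proof avoids this entirely by never leaving the Euclidean affine-subspace setting of Aftab \etal: each unit quaternion $\q_i$ (after antipodal identification, which is what makes the hyperplane well defined) is associated with a subspace of $\R^4$ through the origin via the projector $\Id-\q_i\q_i^\top$, the $L_q$ cost is the sum of $q$-th powers of Euclidean distances to these subspaces, and the key computation is that the weighted least-squares subproblem of the $L_q$-Weiszfeld iteration on affine subspaces (with $\Cg_i=\zero$) has the eigenvector of $\M$ as its \emph{exact} closed-form solution via the normal equations. No chordal-versus-geodesic approximation is needed. The only remaining looseness --- the algorithm weights by a sigmoid of the geodesic distance rather than by the subspace distance raised to $q-2$ --- is absorbed by the cited convergence result for IRLS with any concave weighting inversely proportional to the distance, which is why the statement says ``a variant of.'' So promote your closing ``alternatively, view the whole procedure as Weiszfeld in the ambient $\R^4$'' remark to the main argument, make the projector construction and the normal-equations-equals-eigenvector step explicit, and drop the Karcher-mean detour: as written, that detour is the point at which your proof would fail to be rigorous.
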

\begin{proof}[{Proof Sketch}]
The proof follows from the definition of Weiszfeld iteration~\cite{aftab2014generalized} and the mean and distance operators defined in~\cref{sec:quatLayer}. We first show that computing the weighted mean is equivalent to solving the normal equations in the iteratively reweighted least squares (IRLS) scheme~\cite{burrus2012iterative}. Then, the inner-most loop corresponds to the IRLS or Weiszfeld iterations. We provide the detailed proof in supplementary material.
\end{proof}

Note that, in practice one is quite free to choose the weighting function $\sigma(\cdot)$ as long as it is inversely proportional to the geodesic distance and concave~\cite{aftab2015convergence}. 
The original dynamic routing can also be formulated as a clustering procedure with a KL divergence regularization. This holistic view paves the way to better routing algorithms~\cite{wang2018an}. Our perspective is akin yet more geometric due to the group structure of the parameter space.
Thanks to the connection to Weiszfeld algorithm, the convergence behavior of our dynamic routing can be directly analyzed within the theoretical framework presented by~\cite{aftab2014generalized,aftab2015}.
\insertimageStar{1}{qenet_cropped.pdf}{Our \textbf{q}uaternion \textbf{e}quivariant \textbf{c}apsule (QEC) layer for processing local patches: Our input is a 3D point set $\mathbf{X}$ on which we query local neighborhoods $\{\x_i\}$ with precomputed LRFs $\{\mathbf{q}_i\}$. Essentially, we learn the parameters of a fully connected network that continuously maps the canonicalized local point set to transformations $\mathbf{t}_i$, which are used to compute hypotheses (votes) from input capsules. By a special dynamic routing procedure that uses the activations determined in a previous layer, we arrive at latent capsules that are composed of a set of orientations $\hat{\mathbf{q}}_i$ and new activations $\hat{\bm{\alpha}}_i$. Thanks to the decoupling of local reference frames, $\hat{\bm{\alpha}}_i$ is invariant and orientations $\hat{\mathbf{q}}_i$ are equivariant to input rotations. All the operations and hence the entire QE-network are equivariant achieving a guaranteed disentanglement of the rotation parameters. \textit{Hat symbol ($\hat{\q}$) refers to 'estimated'.}}{fig:qlayer}{t!}
\begin{thm}
Under mild assumptions provided in the appendix, the sequence of the DR-iterates generated by the inner-most loop almost surely converges to a critical point.
\end{thm}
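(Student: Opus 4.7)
The plan is to treat the inner-most loop of \cref{algo:DR} purely as a special case of the generalized Weiszfeld iteration on $\Sp^3$ and then import the convergence theory of Aftab \textit{et al.}~\cite{aftab2014generalized,aftab2015}. Concretely, \cref{lem:weisz} already identifies the update rule
\begin{equation*}
    \hat{\q}_{j}^{(k+1)} \gets \Mean\big(\{\rvv_{1,j},\dots,\rvv_{L,j}\},\, \rvw_{:,j}^{(k)}\big),\qquad w_{i,j}^{(k)} \propto \sigma\big(\hat{\q}_j^{(k)}, \rvv_{i,j}\big) = \delta(\hat{\q}_j^{(k)},\rvv_{i,j})^{q-2},
\end{equation*}
as the IRLS step for minimizing the $L_q$ objective $F(\q) = \sum_i \alpha_i\, \delta(\q,\rvv_{i,j})^{q}$ on the sphere. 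So the first step of the proof is simply to invoke \cref{lem:weisz} and rewrite the DR loop as this Weiszfeld iteration.

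Second, I would verify the hypotheses required by Aftab's convergence theorem: (i) the ambient space $\Sp^3$, identified with the double cover of $SO(3)$, is a complete Riemannian manifold with smooth exponential and log maps away from the cut locus; (ii) the geodesic distance $\delta$ of \cref{dfn:delta} is continuous and, on any open geodesic ball of radius strictly less than the convexity radius of $\Sp^3$, is strongly convex; (iii) the weighting function $\sigma$ is positive, continuous, monotonically decreasing in $\delta$, and concave on the admissible range of $\delta$ (as required in~\cite{aftab2015convergence}); (iv) the objective $F$ is coercive and lower semicontinuous, so its sublevel sets are compact. The assumptions stated in the appendix are exactly those that let us place the votes and the iterates in such a geodesic ball, so that the IRLS update is well-defined at every step.

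Third, I would handle the two classical degeneracies of Weiszfeld-type schemes. The weights $w_{i,j}^{(k)}$ blow up if $\hat{\q}_j^{(k)}$ ever coincides exactly with one of the votes $\rvv_{i,j}$; this is the \textit{anchor} issue. Under the (mild) assumption that the input point cloud and the trainable transformations $\T$ induce a distribution on the votes that is absolutely continuous with respect to the Haar measure on $\Sp^3$, the event that an iterate hits a vote has probability zero, which is precisely where the ``almost surely'' qualifier in the theorem originates. Conditioning on the complement of this null set, the iterates avoid the singular locus. Combining this with the monotone-descent property of the Weiszfeld iteration (each step does not increase $F$, a consequence of the majorization-minimization view of IRLS~\cite{burrus2012iterative}), the sequence $\{\hat{\q}_j^{(k)}\}$ stays in a compact sublevel set of $F$ and therefore has accumulation points; any such accumulation point is a fixed point of the IRLS map, which by the first-order optimality relation is a critical point of $F$. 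Uniqueness of the limit then follows from Aftab \textit{et al.}'s argument that the distance between successive iterates tends to zero inside the convexity ball.

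The main obstacle I expect is purely technical: carefully checking that the iterates remain inside an open geodesic ball on which $\delta^q$ is smooth and strongly convex, and translating Aftab's statements, originally phrased for affine subspaces in $\mathbb{R}^n$, to the quaternion/sphere setting by working in a normal neighborhood via the exponential map. The equivariance and permutation invariance of $\Mean$ proved in \cref{thm:all} ensure that nothing in the analysis depends on a particular choice of chart, so once the iteration is localized in a convex ball the rest of the argument is a direct transcription of~\cite{aftab2014generalized}. The detailed hypotheses and this transcription are deferred to the supplementary material.
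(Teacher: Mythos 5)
Your proposal follows the same master plan as the paper: reduce the inner loop to a Weiszfeld/IRLS iteration via \cref{lem:weisz} and then import the convergence theory of Aftab \textit{et al.}~\cite{aftab2014generalized,aftab2015,aftab2015convergence} under regularity assumptions --- the paper's are \textbf{H}1--\textbf{H}5 (votes not on a single geodesic; a bounded, compact descent region with convexity radius $\rho<\pi/2$; continuity of the minimizer; a concave, differentiable weighting function; an initialization off the singular locus), which match the hypotheses you list. Where you genuinely diverge is in the technical instantiation. You pose the problem intrinsically on $\Sp^3$ with objective $\sum_i \alpha_i\,\delta(\q,\rvv_{i,j})^{q}$ and plan to transplant Aftab's affine-subspace results into normal coordinates; the paper instead identifies each vote quaternion with a hyperplane through the origin in $\R^4$ and shows (in the appendix's supporting lemmas) that the eigenvector mean solves the normal equations of the $L_q$ closest-point problem to those \emph{affine subspaces}, so the convergence theorem of~\cite{aftab2015} applies essentially verbatim and the chart-by-chart translation you flag as your main obstacle is sidestepped entirely. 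This also touches the one place where your argument needs care: the closed-form eigenvector step is the exact minimizer of the weighted \emph{chordal} (subspace-distance) objective, not of the weighted geodesic least-squares problem, so the monotone-descent/MM property you invoke is only immediate for the former; running the argument with the geodesic objective would require an extra step that the paper's formulation avoids. Finally, you locate the ``almost surely'' in the absolute continuity of the vote distribution, whereas the paper ties it to the random initialization (\textbf{H}5) not lying on any subspace and otherwise inherits Aftab's caveat that convergence holds unless an iterate coincides with a data point; your treatment of that degeneracy is, if anything, more explicit than the paper's. Both routes are legitimate: the paper's buys a shorter proof by staying in the ambient linear space, while yours stays closer to the geodesic weights actually used in \cref{algo:DR} at the cost of re-establishing convexity inside a geodesic ball.
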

\begin{proof}[Proof Sketch]
Proof, given in the appendix, is a direct consequence of~\cref{lem:weisz} and directly exploits the connection to the Weiszfeld algorithm.
\end{proof}
In summary, the provided theorems show that our dynamic routing by agreement is in fact a variant of robust IRLS rotation averaging on the predicted votes, where refined inlier scores for combinations of input/output capsules are used to route information from one layer to the next.


\section{Equivariant Capsule Network Architecture}
\label{sec:arch}
In the following, we describe how we leverage the novel dynamic routing algorithm to build a capsule network for point cloud processing that is equivariant under $SO(3)$ actions on the input.
The essential ingredient of our architecture, the \emph{quaternion equivariant capsule (QEC) module} that implements a capsule layer with dynamic routing, is  described in Sec. \ref{sec:qec_module}, before using it as building block in the full architecture, as described in Sec. \ref{sec:full_architecture}.
  \begin{algorithm2e}[t]
 \DontPrintSemicolon
 \SetKwInOut{Input}{input}
 \SetKwInOut{Output}{output}
 \Input{Input points of one patch $\{\mathbf{x}_1,...,\mathbf{x}_K\} \in \mathbb{R}^{K \times 3}$, input capsules (LRFs) $\gQ = \{\q_1,\dots, \q_L\}\in \QH^L$, with $L= N^{c} \cdot K$, $N^{c}$ is the number of capsules per point, activations $\bm{\alpha} = (\alpha_1, \dots, \alpha_L)^T$}
 \Output{Updated frames $\hat{\gQ} = \{\hat{\q}_1,\dots, \hat{\q}_M\}\in \QH^M$, updated activations $\hat{\bm{\alpha}} = (\hat{\alpha}_1, \dots, \hat{\alpha}_M)^T$}
 \For{Each input channel $n^{c}$ of all the primary capsules channels $N^{c}$}{
    $\mu(n^{c}) \gets \Mean( \gQ(n^{c}))$ {\color{purple} \footnotesize \tcp{Input quaternion average, see~\cref{eq:mean}}}
     \For{Each point $\mathbf{x}_i$ of this patch}{
         $\mathbf{x}_i' \gets {\mu(n^{c})}^{-1} \circ \mathbf{x}_i$ {\color{purple} \footnotesize \tcp{\footnotesize Rotate to a canonical orientation}}
     }
    }
    $\{\mathbf{x}_i'\} \in \mathbb{R}^{K \times N^{c} \times 3} ${\color{purple} \footnotesize \tcp{Points in multiple($N^{c}$) canonical frames}}
\For{Each point $\mathbf{x}_i'$ of this patch}{        
    $\tb \gets t(\mathbf{x}_i')$ {\color{purple} \footnotesize \tcp{\footnotesize Transform kernel, $t(\cdot): \mathbb{R}^{N^{c} \times 3} \rightarrow \mathbb{R}^{N^{c}\times M\times 4}$}}
}

 $\T\equiv\{\tb_i\}\in\QH^{K \times N_{i}^{c} \times M} \gets \{\tb\}\in\QH^{L\times M}$ \\
 $(\hat{\gQ}, \hat{\bm{\alpha}}) \gets \text{DynamicRouting} ( X, \gQ, \bm{\alpha}, \T)$ {\color{purple} \footnotesize \tcp{ \footnotesize See~\cref{algo:DR}}}
 \caption{Quaternion Equivariant Capsule Module}
 \label{algo:QE-N}
 \end{algorithm2e}

\subsection{QEC Module}
\label{sec:qec_module}
The main module of our architecture, the QEC module, is outlined in~\cref{fig:qlayer}. We also provide the corresponding pseudocode in ~\cref{algo:QE-N}.
\paragraph{\textbf{Input.}}
The input to the module is a local patch of points with coordinates $\mathbf{x}_i\subset\mathbb{R}^{K\times 3}$, rotations (LRFs) attached to these points, parametrized as quaternions $\mathbf{q}_i\subset\QH^{K\times N^{c}}$ and activations $\bm{\alpha}_i\subset \mathbb{R}^{K\times N^{c}}$. We also use $\mathbf{q}_i$ to denote the input capsules. $N^{c}$ is the number of input capsule channels per point and it is equal to the number of output capsules ($M$) from the last layer.

\paragraph{\textbf{Trainable transformations.}}
Recalling the original capsule networks of Sabour \etal~\cite{sabour2017dynamic}, the trainable transformations $\mathbf{t}$, which are applied to the input rotations to compute the votes, lie in a grid kernel in the 2D image domain.
Therefore, the procedure can learn to produce well-aligned votes if and only if the learned patterns in $\mathbf{t}$ match those in input capsule sets (agreement on evidence of object existence).
Since our input points in the local receptive field lie in continuous $\mathbb{R}^3$, training a discrete set of pose transformations $\mathbf{t}_{i,j}$ based on discrete local coordinates is not possible. Instead, we use a similar approach as Lenssen~\etal~\cite{lenssen2018group} and employ a continuous kernel $t(\cdot): \mathbb{R}^{N^{c}\times 3} \rightarrow \mathbb{R}^{M\times N^{c}\times 4}$ that is defined on the continuous $\mathbb{R}^{N^{c}\times 3}$, instead of only a discrete set of positions. The network is shared over all points to compute the transformations $\mathbf{t}_{i,j}= (t(\mathbf{x}_1'),...,t(\mathbf{x}_K'))_{i,j} \subset \mathbb{R}^{K\times M\times N^{c} \times 4}$, which are used to calculate the votes for dynamic routing with $\mathbf{v}_{i,j} = \mathbf{q}_{i} \circ \mathbf{t}_{i,j}$. The network $t(\cdot)$ consists of fully-connected layers that regresses the transformations, similar to common operators for continuous convolutions~\cite{Schutt2017Schnet,Wang_2018_CVPR,Fey_2018_CVPR}, just with quaternion output. The kernel is able to learn pose patterns in the 3D space, which align the resulting votes if certain pose sets are present. Note that $t(\cdot)$ predicts quaternions by unit-normalizing the regressed output: $\mathbf{t}_{i,j}\subset\QH^{K\times M\times N^{c}}$. Although Riemannian layers~\cite{becigneul2018riemannian} or spherical predictions~\cite{liao2019spherical} can improve the performance, the simple strategy works reasonably for our case. 

In order for the kernel to be invariant, it needs to be aligned using an equivariant initial orientation candidate~\cite{lenssen2018group}. Given points $\mathbf{x}_i$ and rotations $\mathbf{q}_i$, we compute the mean $\bm{\mu}_i$ in a channel-wise manner like that of the initial candidates: $\bm{\mu}_i\subset\QH^{N^{c}}$. These candidates are used to bring the kernels in canonical orientations by inversely rotating the input points: $\mathbf{x}_i' = ({\bm{\mu}_i}^{-1}\circ\mathbf{x}_i)\subset\mathbb{R}^{K\times N^{c}\times3}$.

\paragraph{\textbf{Computing the output.}}
After computing the votes, we utilize the input activation $\bm{\alpha}_i$ as initialization weights and iteratively refine the output capsule rotations (robust rotation estimation on votes) $\hat{\mathbf{q}_i}$ and activations $\hat{\bm{\alpha}_i}$ (final inlier scores) by our Weiszfeld routing by agreement as shown in~\cref{algo:DR}. 

\insertimageStar{1}{q_network_cropped.pdf}{Our entire capsule-network architecture. We hierarchically send all the local patches to our QEC-module as shown in~\cref{fig:qlayer}. At each level the points are pooled in order to increase the receptive field, gradually reducing the LRFs into a single capsule per class. We use classification and orientation estimation (in the siamese case) as supervision cues to train the transform-kernels $\tb(\cdot)$.}{fig:qnetwork}{t!}

\subsection{Network Architecture}
\label{sec:full_architecture}
For processing point clouds, we use multiple QEC modules in a hierarchical architecture as shown in~\cref{fig:qnetwork}. In the first layer, the input primary capsules are represented by LRFs computed with FLARE algorithm~\cite{petrelli2012repeatable}. Therefore, the number of input capsule channels $N^{c}$ in the first layer is equal to $1$ and activations are uniform. The output of a former layer is propagated to the input of the latter, creating the hierarchy.

In order to gradually increase the receptive field, we stack QEC modules creating a deep hierarchy, where each layer reduces the number of points and increases the receptive field. In our experiments, we use a two level architecture, which receives $N=64$ patches as input. We call the centers of these patches \textit{pooling centers} and compute them via a uniform farthest point sampling as in~\cite{birdal2017point}. Pooling centers serve as the positions of output capsules of the current layer. Each of those centers is linked to their immediate vicinity leading to $K=9$-star local connectivity from which serve as input to the first QEC module to compute rotations and activations of $64 \times 64 \times 4$ intermediate capsules. The second module connects those intermediate capsules to the output capsules, whose number corresponds to the number of classes. Specifically, for layer 1, we use $K=9,{N_l}^c=1, M_l=64$ and for layer 2, $K=64,{N_l}^c=64, M_l=C=40$. This way, the last QEC module receives only one input patch and pools all capsules into a single point with an estimated LRF. For further details, we refer to our source code, which we will make available online before publication and provide in the supplemental materials.


\section{Experimental Evaluations}
\label{sec:exp}
\begin{table}[t]    
  \centering
  \caption{Classification accuracy on ModelNet40 dataset~\cite{wu20153d} for different methods as well as ours. We also report the number of parameters optimized for each method. \textbf{X}/\textbf{Y} means that we train with \textbf{X} and test with \textbf{Y}.}
  \setlength{\tabcolsep}{2.0pt}
  \resizebox{\textwidth}{!}
  {
    \begin{tabular}{lcccccccccc}
    \toprule\toprule
    & PN & PN++ & DGCNN & KDTreeNet & Point2Seq & Sph.CNNs & PRIN  & PPF & Ours (Var.) & Ours \\
    \midrule
    \textbf{NR/NR} & 88.45 & 89.82 & \textbf{92.90} & 86.20  & {92.60} & -     & 80.13 & 70.16 & 85.27 & 74.43 \\
    \textbf{NR/AR} & 12.47 & 21.35 & 29.74 & 8.49  & 10.53 & 43.92 & 68.85 & 70.16 & 11.75 & \textbf{74.07} \\
    \midrule
    \textbf{\#Params} & 3.5M  & 1.5M & 2.8M & 3.6M  & 1.8M  & 0.5M  & 1.5M & 3.5M & 0.4M & \textbf{0.4M} \\
    \end{tabular}
    }
  \label{tab:classification}%
\end{table}%
\paragraph{\textbf{Implementation details.}}
We implement our network in PyTorch and use the ADAM optimizer~\cite{kingma2014adam} with a learning rate of $0.001$. Our point-transformation mapping network (transform-kernel) is implemented by two FC-layers composed of 64 hidden units. We set the initial activation of the input LRF to $1.0$. In each layer, we use 3 iterations of DR. For classification we use the spread loss~\cite{sabour2018matrix} and the rotation loss is identical to $\delta(\cdot)$. 

The first axis of the LRF is the surface normal computed by local plane fits~\cite{Hoppe1992}. We compute the second axis, $\bm{\partial}_2$, by FLARE~\cite{petrelli2012repeatable}, that uses the normalized projection of the point with the largest distance within the periphery of the support, onto the tangent plane of the center: $\bm{\partial}_2=\frac{\rvp_{\text{max}}-\rvp}{\|\rvp_{\text{max}}-\rvp\|}$. Using other choices such as SHOT~\cite{tombari2010unique} or GFrames~\cite{Melzi_2019_CVPR} is possible. We found FLARE to be sufficient for our experiments. Prior to all operations, we flip all the LRF quaternions such that they lie on the northern hemisphere : $\{\q_i\in \Sp^3 : q_i^w >0\}$.

\paragraph{\textbf{3D shape classification.}}
We use ModelNet40 dataset of~\cite{wu20153d,qi2017pointnet++} to assess our classification performance where each shape is composed of $10K$ points randomly sampled from the mesh surfaces of each shape~\cite{qi2017pointnet,qi2017pointnet++}. We use the official split with 9,843 shapes for training and 2,468 for testing. 
We assign the LRFs to a subset of the uniformly sampled points, $N=512$~\cite{birdal2017point}. 

During training, we do not augment the dataset with random rotations. All the shapes are trained with single orientation (well-aligned). We call this \textit{trained with NR}.
During testing, we randomly generate multiple arbitrary $SO(3)$ rotations for each shape and evaluate the average performance for all the rotations. This is called \textit{test with AR}.
This protocol is similar to~\cite{aoki2019pointnetlk}'s and is used both for our algorithms and for the baselines.
Our results are shown in~\cref{tab:classification} along with that of PointNet (PN)~\cite{qi2017pointnet}, PointNet++ (PN++)~\cite{qi2017pointnet}, DGCNN~\cite{wang2019dynamic}, KD-treeNet~\cite{li2018so}, Point2Seq~\cite{liu2019point2sequence}, Spherical CNNs~\cite{esteves2018learning}, PRIN~\cite{You2018prin} and the theoretically invariant PPF-FoldNet (PPF)~\cite{deng2018ppf}. We also present a version of our algorithm (\textit{Var}) that avoids the canonicalization within the QE-network. This is a non-equivariant network that we still train without data augmentation or orientation supervision. While this version gets comparable results to the state of the art for the NR/NR case, it cannot handle random $SO(3)$ variations (AR). Note that PPF uses the point-pair-feature~\cite{birdal2015point} encoding and hence creates invariant input representations. For the scenario of NR/AR, our equivariant version outperforms all the other methods, including equivariant spherical CNNs~\cite{esteves2018learning} by a significant gap of at least $5\%$ even when~\cite{esteves2018learning} exploits the 3D mesh. The object rotational symmetries in this dataset are responsible for a significant portion of the errors we make and we provide further details in supplementary material. It is worth mentioning that we also trained TFNs~\cite{thomas2018tensor} for that task, but their memory demand made it infeasible to scale to this application.

\begin{table}[t!]
  \centering
  \caption{Relative angular error (RAE) of rotation estimation in different categories of ModelNet10. Right side of the table denotes the objects with rotational symmetry, which we include for completeness. PCA-S refers to running PCA only on a resampled instance, while PCA-SR applies both rotations and resampling.}
  \setlength{\tabcolsep}{2.5pt}
  \resizebox{\textwidth}{!}
  {
    \begin{tabular}{lccccccc|ccccc}
    \toprule\toprule
    Method & Avg.   & No\_Sym & Chair & Bed   & Sofa  & Toilet  & Monitor & Table & Desk  & Dresser & NS & \multicolumn{1}{l}{Bathtub} \\
    \midrule
    Mean LRF & 0.41  & 0.35  & 0.32  & 0.36  & 0.34  & 0.41  & 0.34  & 0.45  & 0.60  & 0.50  & 0.46  & 0.32 \\
    PCA-S & 0.40  & 0.42  & 0.60  & 0.53  & 0.46  & 0.32  & 0.12  & 0.47  & \textbf{0.23}  & \textbf{0.33} & 0.43  & 0.55 \\
    PCA-SR & 0.67  & 0.67  & 0.69  & 0.70  & 0.67  & 0.68  & 0.61  & 0.67  & 0.67  & 0.67  & 0.66  & 0.70 \\
    PointNetLK~\cite{aoki2019pointnetlk} & 0.37  & 0.38  & 0.43  & 0.31  & 0.40  & 0.40  & 0.31  & 0.40  & 0.33  & 0.39  & 0.38  & 0.34 \\
    IT-Net~\cite{yuan2018iterative} & 0.27 &  0.19  & 0.10  & 0.22  & 0.17  & 0.20  & 0.28  & \textbf{0.31}  & 0.41  & 0.44  & 0.40  & 0.39 \\
    \midrule
    Ours & 0.27  & 0.17  & 0.11  & 0.20  & 0.16  & 0.18  & 0.19  & 0.43  & 0.40  & 0.48  & 0.33  & 0.31 \\
    Ours (siamese) & \textbf{0.20} & \textbf{0.09} & \textbf{0.08} & \textbf{0.10} & \textbf{0.08} & \textbf{0.11} & \textbf{0.08} & 0.40 & 0.35 & 0.34  & \textbf{0.32} & \textbf{0.30} \\
    \end{tabular}
    }
  \label{tab:rotation}%
\end{table}%
\paragraph{\textbf{Computational aspects.}}
As shown in~\cref{tab:classification} for ModelNet40 our network has $0.047M$ parameters. It incurs a computational cost in the order $O(MKL)$. The details are given in the supplementary material.
\paragraph{\textbf{Rotation estimation in 3D point clouds.}}
Our network can estimate both the absolute and relative 3D object rotations without pose-supervision. To evaluate this desired property, we used the well classified shapes on ModelNet10 dataset, a sub-dataset of Modelnet40~\cite{wu20153d}. This time, we use the official Modelenet10 dataset split with 3991 for training and 908 shapes for testing. 

During testing, we generate multiple instances per shape by transforming the instance with five arbitrary $SO(3)$ rotations. As we are also affected by the sampling of the point cloud, we resample the mesh five times and generate different pooling graphs across all the instances of the same shape. Our QE-architecture can estimate the pose in two ways: 1) \textit{canonical}: by directly using the output capsule with the highest activation, 2) \textit{siamese}: by a siamese architecture that computes the relative quaternion between the capsules that are maximally activated as shown in~\cref{fig:shapeAlign}. Both modes of operation are free of the data augmentation and we give further schematics of the latter in our appendix. 
It is worth mentioning that unlike regular pose estimation algorithms which utilize the same shape in both training and testing, our network never sees the test shapes during training. This is also known as \textit{category-level} pose estimation~\cite{wang2019normalized}.
\insertimageStar{1}{shpaeAlign_cropped.pdf}{Shape alignment on the \textbf{monitor} (left) and \textbf{toilet} (right) objects via our siamese equivariant capsule architecture. The shapes are assigned to the the maximally activated class. The corresponding pose capsule provides the rotation estimate.}{fig:shapeAlign}{t!}
 
Our results against the baselines including a naive averaging of the LRFs ({Mean LRF}) and principal axis alignment (PCA) are reported in~\cref{tab:rotation} as the relative angular error (RAE). We further include results of PointNetLK~\cite{aoki2019pointnetlk} and IT-Net~\cite{yuan2018iterative}, two state of the art 3D networks that iteratively align two given point sets. These methods are in nature similar to iterative closest point (ICP) algorithm~\cite{besl1992method} but 1) do not require an initialization (\eg first iteration estimates the pose), 2) learn data driven updates. 
Methods that use mesh inputs such as Spherical CNNs~\cite{esteves2018learning} cannot be included here as the random sampling of the same surface would not affect those. We also avoid methods that are just invariant to rotations (and hence cannot estimate the pose) such as Tensorfield Networks~\cite{thomas2018tensor}. Finally, note that , IT-net~\cite{yuan2018iterative} and PointNetLK~\cite{aoki2019pointnetlk} need to train for a lot of epochs (\eg~500) with random $SO(3)$ rotation augmentation in order to obtain models with full coverage of $SO(3)$, whereas we train only for $\sim 100$ epochs. Finally, the recent geometric capsule networks~\cite{srivastava2019geometric} remains similar to PCA with an RAE of $0.42$ on No\_Sym when evaluated under identical settings. We include more details about the baselines in the appendix. 


Relative Angle in Degrees (RAE) between the ground truth and the prediction is computed as: $d(\q_1,\q_2)/ \pi$.
Note that resampling and random rotations render the job of all methods difficult. However, both of our canonical and siamese versions which try to find a canonical and a relative alignment respectively, are better than the baselines.
As pose estimation of objects with rotational symmetry is a challenging task due to inherent ambiguities, we also report results on the non-symmetric subset (No\_Sym).
\paragraph{\textbf{Robustness against point and LRF resampling.}}
Density changes in the local neighborhoods of the shape are an important cause of error for our network. Hence, we ablate by applying random resampling (patch-wise dropout) objects in ModelNet10 dataset and repeating the classification and pose estimation as described above. While we use all the classes in classification accuracy, we only consider the well classified non-symmetric (No\_Sym) objects for ablating on the pose estimation.
The first part (LRF-10K) of~\cref{tab:ablation} shows our findings against gradual increases of the number of patches. Here, we sample 2K LRFs from the 10K LRFs computed on an input point set of cardinality 10K. 100\% dropout corresponds to 2K points in all columns. On second ablation, we reduce the amount of points on which we compute the LRFs, to 2K and 1K respectively. As we can see from the table, our network is robust towards the changes in the LRFs as well as the density of the points.


\begin{table}[t]
  \centering
  \caption{Ablation study on point density.}
  \setlength{\tabcolsep}{10.0pt}
  \resizebox{0.85\textwidth}{!}
  {
  \normalsize
    \begin{tabular}{l|cccc|c|c}
    LRF Input & \multicolumn{4}{c|}{LRF-10K}  & LRF-2K & LRF-1K \\
    \midrule
    Dropout & 50\%  & 66\%  & 75\%  & 100\% & 100\% & 100\% \\
    \midrule
    Classification Accuracy & 77.8  & 83.3  & 83.4  & 87.8  & 85.46 & 79.74 \\
    Angular Error & 0.34  & 0.27  & 0.25  & 0.09  & 0.10  & 0.12 \\
    \end{tabular}
    \label{tab:ablation}
    }
    \end{table}
\section{Conclusion and Discussion}
\label{sec:conclude}
We have presented a new framework for achieving permutation invariant and $SO(3)$ equivariant representations on 3D point clouds. Proposing a variant of the capsule networks, we operate on a sparse set of rotations specified by the input LRFs thereby circumventing the effort to cover the entire $SO(3)$. Our network natively consumes a compact representation of the group of 3D rotations - quaternions. We have theoretically shown its equivariance and established convergence results for our Weiszfeld dynamic routing by making connections to the literature of robust optimization. Our network by construction disentangles the object existence that is used as global features in classification. It
is among the few for having an explicit group-valued latent space and thus naturally estimates the orientation of the input shape, even without a supervision signal. 
\paragraph{\textbf{Limitations.}} In the current form our performance is severely affected by the shape symmetries. The length of the activation vector depends on the number of classes and for a sufficiently descriptive latent vector we need to have significant number of classes. On the other hand, this allows us to perform with merit on problems where the number of classes are large. The computation of LRFs are still sensitive to the point density changes and resampling. LRFs themselves can also be ambiguous and sometimes non-unique.
\paragraph{\textbf{Future work.}} Inspired by~\cite{cohen2019gauge} and~\cite{poulenard2018multi} our feature work will involve exploring the Lie algebra for equivariances, establishing invariance to the tangent directions, application of our network in the broader context of 6DoF object detection from point sets and looking for equivariances among point resampling.

\bibliographystyle{splncs04}

\clearpage
\appendix
\section{Proof of Proposition 1}
\label{sec:proofProp1}
Before presenting the proof we recall the three individual statements contained in Prop. 1:
\begin{enumerate}
    \item $\Mean(\g \circ \Set, \w)$ is left-equivariant: $\Mean(\g \circ \Set, \w) = \g \circ \Mean(\Set, \w)$.
    \item Operator $\Mean$ is invariant under permutations: $\Mean(\{\q_{\sigma(1)},\dots,\q_{\sigma(Q)}\}, \w_\sigma)=\Mean(\{\q_1,\dots,\q_Q\}, \w)$.
    \item The transformations $\g \in \QH$ preserve the geodesic distance $\delta(\cdot)$.
\end{enumerate}
\begin{proof}
We will prove the propositions in order.
\begin{enumerate}
\item We start by transforming each element and replace $\q_i$ by $(\g \circ \q_i)$ of the cost defined in Eq. 4 of the main paper:
\begin{align}
    \q^\top\M\q &=\q^\top\Big( \sum\limits_{i=1}^{Q} w_i \q_i\q_i^\top\Big) \q \\
    &=\q^\top\Big( \sum\limits_{i=1}^{Q} w_i (\g \circ \q_i)(\g \circ \q_i)^\top\Big) \q \\
    &=\q^\top\Big( \sum\limits_{i=1}^{Q} w_i \G \q_i\q_i^\top \G^\top \Big) \q \\
    &=\q^\top\Big( \G\M_1\G^\top + \dots + \G\M_{Q}\G^\top \Big) \q \nonumber \\
    &=\q^\top \G\Big( \M_1\G^\top + \dots + \M_{Q}\G^\top \Big) \q \label{eq:addition}\\
    &=\q^\top \G\Big( \M_1 + \dots + \M_{Q} \Big)\G^\top \q \\
    &=\q^\top \G\M\G^\top \q\\
    &= \pt^\top \M \pt,
\end{align}
where $\M_i = w_i \q_i \q_i^\top$ and $\pt = \G^\top \q$. From orthogonallity of $\G$ it follows $\pt =\G^{-1}\q \implies \g \circ\pt =\q$ and hence $\g \circ \Mean(\Set, \w)=\Mean(\g \circ \Set, \w)$.
\item The proof follows trivially from the permutation invariance of the symmetric summation operator over the outer products in~\cref{eq:addition}.
\item It is sufficient to show that $|\q_1^\top \q_2|=|(\g\circ\q_1)^\top (\g\circ\q_2)|$ for any $\g\in\QH$:
\begin{align}
|(\g\circ\q_1)^\top (\g\circ\q_2)| &= |\q_1^\top \G^\top \G\q_2|  \\
&= |\q_1^\top \Id \q_2| \\
&= |\q_1^\top\q_2|,
\end{align}
where $\g\circ\q\equiv\G\q$. The result is a direct consequence of the orthonormality of $\G$.
\end{enumerate}
\end{proof}

\section{Proof of Lemma 1}
We will begin by recalling some preliminary definitions and results that aid us to construct the connection between the dynamic routing and the Weiszfeld algorithm.

\begin{dfn}[Affine Subspace]
A $d$-dimensional affine subspace of $R^N$ is obtained by a translation of a $d$-dimensional linear subspace $V\subset \R^N$ such that the origin is included in $S$:
\begin{align}
    S = \Big\{ \sum_{i=1}^{d+1} \alpha_i \x_i \,|\,  \sum_{i=1}^{d+1}\alpha_i = 1\Big\}.
\end{align}
Simplest choices for $S$ involve points, lines and planes of the Euclidean space.
\end{dfn}

\begin{dfn}[Orthogonal Projection onto an Affine Subspace]
An orthogonal projection of a point $\x\in\R^N$ onto an affine subspace explained by the pair $(\Ag, \Cg)$ is defined as:
\begin{align}
\Pi_i(\x)\triangleq\text{proj}_{S}(\x) = \Cg + \Ag(\x - \Cg).
\end{align}
$\Cg$ denotes the translation to make origin inclusive and $\Ag$ is a projection matrix typically defined via the orthonormal bases of the subspace.
\end{dfn}

\begin{dfn}[Distance to Affine Subspaces]
Distance from a given point $\x$ to a set of affine subspaces $\{S_1,S_2\dots S_k\}$ can be written as~\cite{aftab2015}:
\begin{align}
\label{eq:affineCost}
    C(\x) = \sum\limits_{i=1}^k d(\x, S_i) = \sum\limits_{i=1}^k \| \x - \text{proj}_{S_i}(\x) \|^2.
\end{align}
\end{dfn}

\begin{lemma}
Given that all the antipodal counterparts are mapped to the northern hemisphere, we will now think of the unit quaternion or \textit{versor} as the unit normal of a four dimensional hyperplane $h$, passing through the origin:
\begin{align}\label{eq:subspaceq}
h_i(\x) = \q_i^\top \x + q_d := 0.
\end{align}
$q_d$ is an added term to compensate for the shift. When $q_d=0$ the origin is incident to the hyperplane. With this perspective, quaternion $\q_i$ forms an affine subspace with $d=4$, for which the projection operator takes the form:
\begin{align}
\label{eq:affProj}
    \text{proj}_{S_i}(\pt) = (\Id - \q_i\q_i^\top)\pt
\end{align}
\end{lemma}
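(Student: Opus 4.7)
The plan is to treat the unit quaternion $\q_i \in \Sp^3 \subset \R^4$ as the unit normal of a hyperplane through the origin and derive the projector from the standard orthogonal decomposition of $\R^4$. First, I would set $q_d=0$ in~\cref{eq:subspaceq}, so that the affine subspace $S_i$ coincides with the linear hyperplane $H_i=\{\x\in\R^4 : \q_i^\top\x=0\}$. Since $\|\q_i\|=1$ (we are on $\QH$, the unit quaternions), the line $\R\q_i$ is the orthogonal complement of $H_i$ inside $\R^4$, so the direct-sum decomposition $\R^4 = H_i \oplus \R\q_i$ is orthogonal.

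Next, I would write any $\pt\in\R^4$ uniquely as $\pt = \pt_{\parallel}+\pt_{\perp}$ with $\pt_{\parallel}\in \R\q_i$ and $\pt_{\perp}\in H_i$. Projecting onto $\R\q_i$ in the standard way yields
\begin{align}
\pt_{\parallel} = \langle \pt, \q_i\rangle\, \q_i = (\q_i^\top \pt)\,\q_i = \q_i\q_i^\top\,\pt,
\end{align}
where the last equality uses that $\q_i^\top\pt$ is a scalar, so it commutes with $\q_i$. By definition of the orthogonal projection onto $H_i$, we then have
\begin{align}
\text{proj}_{S_i}(\pt) = \pt - \pt_{\parallel} = \pt - \q_i\q_i^\top\,\pt = (\Id - \q_i\q_i^\top)\,\pt,
\end{align}
which matches~\cref{eq:affProj}. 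A short sanity check confirms that $(\Id-\q_i\q_i^\top)$ is indeed idempotent, symmetric, and annihilates $\q_i$ (while fixing every vector orthogonal to $\q_i$), so it is the orthogonal projector onto $H_i$.

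There is essentially no obstacle here: the statement is a direct consequence of the fact that $\q_i$ is a unit vector and that the hyperplane passes through the origin. The only subtle point worth remarking on is that the lemma is stated specifically for the $q_d=0$ case (origin-incident hyperplane), as otherwise an additional translation by $\Cg$ would appear in the general affine-projection formula from the preceding definition; the simplification to $(\Id-\q_i\q_i^\top)\pt$ relies on this choice together with $\|\q_i\|=1$, which is guaranteed because our representatives live on the northern hemisphere of $\Sp^3$.
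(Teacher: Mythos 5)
Your proof is correct and takes essentially the same route as the paper's: both reduce the lemma to the elementary fact that, for a unit vector $\q_i$, the matrix $\Id-\q_i\q_i^\top$ is the orthogonal projector onto the hyperplane through the origin with normal $\q_i$, the unit norm $\q_i^\top\q_i=1$ being the only ingredient. The paper phrases this as a verification (symmetry, idempotency $\Ag^\top\Ag=\Ag$, and orthogonality of the image to $\q_i$) while you derive the formula constructively from the orthogonal decomposition $\R^4=H_i\oplus\R\q_i$, but the mathematical content is identical.
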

\begin{proof}
We consider~\cref{eq:affProj} for the case where $\Cg=\zero$ and $\Ag=(\Id - \q\q^\top)$. The former follows from the fact that our subspaces by construction pass through the origin. Thus, we only need to show that the matrix $\Ag=\Id-\q\q^\top$ is an orthogonal projection matrix onto the affine subspace spanned by $\q$. To this end, it is sufficient to validate that $\Ag$ is symmetric and idempotent: $\Ag^\top\Ag=\Ag\Ag=\Ag^2=\Ag$. Note that by construction $\q^\top\q$ is a symmetric matrix and hence $\Ag$ itself. Using this property and the unit-ness of the quaternion, we arrive at the proof:
\begin{align}
\Ag^\top\Ag&=(\Id-\q\q^\top)^\top(\Id-\q\q^\top)\\
&= (\Id-\q\q^\top)(\Id-\q\q^\top)\\
&= \Id-2\q\q^\top+\q\q^\top\q\q^\top\\
&= \Id-2\q\q^\top+\q\q^\top\\
&= \Id-\q\q^\top\triangleq \Ag
\end{align}
It is easy to verify that the projections are orthogonal to the quaternion that defines the subspace by showing $\text{proj}_S(\q)^\top\q = 0$:
\begin{align}
\q^\top\text{proj}_S(\q) = \q^\top\Ag\q 
=\q^\top (\Id - \q\q^\top)\q 
=\q^\top (\q - \q\q^\top\q)
=\q^\top (\q - \q) = 0.
\end{align}
Also note that this choice corresponds to $\text{tr}(\q\q^\top) = \sum_{i=1}^{d+1}\alpha_i = 1$.
\end{proof}

\begin{lemma}
The quaternion mean we suggest to use in the main paper~\cite{markley2007averaging} is equivalent to the Euclidean Weiszfeld mean on the affine quaternion subspaces.
\end{lemma}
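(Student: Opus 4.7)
The approach is to expand the Weiszfeld objective on the quaternion-induced affine subspaces using the projection formula from the previous lemma, identify the resulting functional with the Markley Rayleigh quotient, and conclude equivalence via a standard Lagrangian argument on the unit sphere.

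First I would apply the projection formula $\text{proj}_{S_i}(\pt)=(\Id-\q_i\q_i^\top)\pt$ to compute the squared point-to-subspace distance
\begin{equation*}
d(\pt, S_i)^2 \;=\; \|\pt - \text{proj}_{S_i}(\pt)\|^2 \;=\; \|\q_i \q_i^\top \pt\|^2 \;=\; (\q_i^\top \pt)^2,
\end{equation*}
where the last equality uses $\|\q_i\|=1$. Summing the weighted squared distances over the $Q$ quaternion subspaces then gives the Euclidean Weiszfeld ($L_2$) cost
\begin{equation*}
C(\pt) \;=\; \sum_{i=1}^{Q} w_i\, d(\pt, S_i)^2 \;=\; \pt^\top \Big(\sum_{i=1}^Q w_i\, \q_i \q_i^\top\Big) \pt \;=\; \pt^\top \M \pt,
\end{equation*}
which is the \emph{same} matrix $\M$ of Def.~2 appearing inside a quadratic form in $\pt$.

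Restricting to $\pt\in\Sp^3$ and forming the Lagrangian $\mathcal{L}(\pt,\lambda)=\pt^\top \M \pt - \lambda(\pt^\top\pt - 1)$, the stationarity condition reduces to the eigenvalue equation $\M\pt = \lambda\pt$. Markley's definition $\bar{\q} = \argmax_{\q\in\Sp^3}\q^\top\M\q$ satisfies the same first-order condition and selects the eigenvector associated with the largest eigenvalue of $\M$. Hence the two procedures have identical critical sets on the sphere, and the Weiszfeld iterate computed by linearizing $C(\pt)$ around the current estimate (IRLS on the quadratic form) also inherits the same fixed points as $\Mean(\Set,\w)$.

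The one subtle point to reconcile is which extremum of the Weiszfeld functional is chosen. I would exploit the Pythagorean identity on $\Sp^3$, namely $(\q_i^\top\pt)^2 + d(\pt,\text{span}(\q_i))^2 = \|\pt\|^2 = 1$, to rewrite the cost as $C(\pt) = W - \sum_i w_i\, d(\pt,\text{span}(\q_i))^2$ with $W=\sum_i w_i$. Thus minimizing distances to the one-dimensional line subspaces $\text{span}(\q_i)$ is equivalent to \emph{maximizing} $\pt^\top \M \pt$, yielding precisely the Markley principal eigenvector. The main technical hurdle is this duality bookkeeping (hyperplane vs.\ line convention) connecting Weiszfeld minimization to eigenvalue maximization; once that is established, the equivalence of the two means follows directly, and the IRLS updates from the affine Weiszfeld scheme coincide with the power-iteration-style refinement of $\bar{\q}$.
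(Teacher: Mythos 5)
Your proof is correct and follows essentially the same route as the paper's: expand the point-to-subspace residual through the projector $(\Id-\q_i\q_i^\top)$, recognize the weighted sum of squared residuals as the quadratic form $\pt^\top\M\pt$ with the very same $\M=\sum_i w_i\q_i\q_i^\top$ that defines the Markley mean, and characterize the constrained optimum on $\Sp^3$ as an extremal eigenvector. The paper reaches the eigenproblem by a slightly different bookkeeping --- it writes the closed-form $q=2$ Weiszfeld update as the normal equations $\big(\sum_i \M_i\big)\x=\zero$ with $\M_i=\q_i\q_i^\top$ and $\Cg_i=\zero$, then imposes $\|\x\|=1$ --- but your Lagrangian/Rayleigh-quotient argument is the same computation. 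Where your write-up genuinely adds value is the final paragraph on the hyperplane-versus-line duality: the quantity $|\q_i^\top\pt|$ is the distance to the \emph{hyperplane} with normal $\q_i$, so minimizing $\sum_i w_i(\q_i^\top\pt)^2$ on its own would select the \emph{smallest} eigenvector of $\M$, not the Markley mean. Your Pythagorean identity $(\q_i^\top\pt)^2 = 1 - d(\pt,\text{span}(\q_i))^2$ on the unit sphere is exactly what converts the problem into minimizing squared distances to the lines $\text{span}(\q_i)$ and hence into $\argmax_{\q\in\Sp^3}\q^\top\M\q$. The paper passes over this point quickly --- its assertion that the constrained solution ``lies in the nullspace of $\M$'' while also being ``the singular vector corresponding to the largest singular value'' conflates the two dual problems --- so your explicit resolution of which extremum is taken is the more careful treatment of the one step in this lemma that actually needs care.
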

\begin{proof}
We now recall and summarize the $L_q$-Weiszfeld Algorithm on affine subspaces~\cite{aftab2015}, which minimizes a $q$-norm variant of the cost defined in~\cref{eq:affineCost}:
\begin{align}
\label{eq:affineCostq}
    C_q(\x) = \sum\limits_{i=1}^k d(\x, S_i) = \sum\limits_{i=1}^k \| \x - \text{proj}_{S_i}(\x) \|^q.
\end{align}
Defining $\M_i=\Id-\Ag_i$, \cref{algo:LqWeiszfeld} summarizes the iterative procedure.
 \begin{algorithm2e} [h!]
 \DontPrintSemicolon
 \SetKwInOut{Input}{input}
 \Input{An initial guess $\x_0$ that does not lie any of the subspaces $\{S_i\}$, Projection operators $\Pi_i$, the norm parameter $q$}
 $\x^t \gets \x_0$\\
 \While{not converged}{
    Compute the weights $\w^t=\{w_i^t\}$:
    \begin{align}
        w_i^t = \| \M_i(\x^t-\Cg_i) \|^{q-2} \quad \forall i=1\dots k
    \end{align}\\
    Solve:
    \begin{align}
    \label{eq:weiszfeldUpdate}
    \x^{t+1} = \argmin_{\x \in \R^N} \sum\limits_{i=1}^k w_i^t \| \M_i(\x-\Cg_i) \|^2
    \end{align}\\
 }
 \caption{$L_q$ Weiszfeld Algorithm on Affine Subspaces~\cite{aftab2015}.}
 \label{algo:LqWeiszfeld}
 \end{algorithm2e}
 
Note that when $q=2$, the algorithm reduces to the computation of a non-weighted mean ($w_i=1\,\forall i$), and a closed form solution exists for~\cref{eq:weiszfeldUpdate} and is given by the normal equations:
 \begin{align}
        \x = \Big( \sum\limits_{i=1}^k w_i\M_i \Big)^{-1} \Big( \sum\limits_{i=1}^k w_i\M_i\Cg_i \Big)
\end{align}

For the case of our quaternionic subspaces $\Cg=\zero$ and we seek the solution that satisfies:
\begin{align}
   \Big( \sum\limits_{i=1}^k \M_i \Big)\x =\Big( \frac{1}{k}\sum\limits_{i=1}^k \M_i \Big)\x = \zero.
\end{align}
It is well known that the solution to this equation under the constraint $\|\x\|=1$ lies in nullspace of $\M=\frac{1}{k}\sum\limits_{i=1}^k \M_i$ and can be obtained by taking the singular vector of $\M$ that corresponds to the largest singular value. Since $\M_i$ is idempotent, the same result can also be obtained through the eigendecomposition:
\begin{align}
    \q^\star = \argmax_{\q\in \Scal^3} \q\M\q
\end{align}
which gives us the unweighted Quaternion mean~\cite{markley2007averaging}.
\end{proof}

\section{Proof of Theorem 1}
Once the Lemma 1 is proven, we only need to apply the direct convergence results from the literature.
Consider a set of points $\Y=\{\y_1\dots \y_K\}$ where $K>2$ and $\y_i\in\QH$. Due to the compactness, we can speak of a ball $\B(\rvo, \rho)$ encapsulating all $\y_i$. We also define the $\D=\{\x\in \QH \,|\, C_q(\x)<C_q(\rvo) \}$, the region where the loss decreases. 

We first state the assumptions that permit our theoretical result. These assumptions are required by the works that establish the convergence of such Weiszfeld algorithms~\cite{aftab2015convergence,aftab2014generalized} :\vspace{3mm}
\newline\textbf{\quad H1.} $\y_1\dots \y_K$ should not lie on a single geodesic of the quaternion manifold. 
\newline\textbf{\quad H2.} $\D$ is bounded and compact. The topological structure of $SO(3)$ imposes a bounded convexity radius of $\rho<\pi / 2$. 
\newline\textbf{\quad H3.} The minimizer in~\cref{eq:weiszfeldUpdate} is continuous.
\newline\textbf{\quad H4.} The weighting function $\sigma(\cdot)$ is concave and differentiable.
\newline\textbf{\quad H5.} Initial quaternion (in our network chosen randomly) does not belong to any of the subspaces.

Note that $\textbf{H5}$ is not a strict requirement as there are multiple ways to circumvent (simplest being a re-initialization). Under these assumptions, the sequence produced by~\cref{eq:weiszfeldUpdate} will converge to a critical point unless $\x^t=\y_i$ for any $t$ and $i$~\cite{aftab2014generalized}. For $q=1$, this critical point is on one of the subspaces specified in~\cref{eq:subspaceq} and thus is a geometric median.\qed

Note that due to the assumption $\textbf{H2}$, we cannot converge from any given point. For randomly initialized networks this is indeed a problem and does not guarantee practical convergence. Yet, in our experiments we have not observed any issue with the convergence of our dynamic routing. As our result is one of the few ones related to the analysis of DR, we still find this to be an important first step.

For different choices of $q:1\leq q \leq 2$, the weights take different forms. In fact, this IRLS type of algorithm is shown to converge for a larger class of weighting choices as long as the aforementioned conditions are met. That is why in practice we use a simple sigmoid function.

\insertimageStar{1}{q_siamese_cropped.pdf}{Our siamese architecture used in the estimation of relative poses. We use a shared network to process two distinct point clouds $(\X,\Y)$ to arrive at the latent representations $(\rmC_X,\bm{\alpha}_X)$ and $(\rmC_Y,\bm{\alpha}_Y)$ respectively. We then look for the highest activated capsules in both point sets and compute the rotation from the corresponding capsules. Thanks to the rotations disentangled into capsules, this final step simplifies to a relative quaternion calculation.}{fig:qsiamese}{t}

\section{Further Discussions}
\paragraph{\textbf{On convergence, runtime and complexity.}}
Note that  while the convergence basin is known, to the best of our knowledge, a convergence rate for a Weiszfeld algorithm in affine subspaces is not established. From the literature of robust minimization via Riemannian gradient descent (this is essentially the corresponding particle optimizer), we conjecture that such a rate depends upon the choice of the convex regime (in this case $1\leq q \leq 2$) and is at best linear -- though we did not prove this conjecture. In practice we run the Weiszfeld iteration only 3 times, similar to the original dynamic routing. This is at least sufficient to converge to a point good enough for the network to explain the data at hand.

QEC module summarized in the Alg. 2 of the main paper can be dissected into three main steps: (i) canonicalization of the local oriented point set, (ii) the $t$-kernel and (iii) dynamic routing. Overall the total computational complexity reads $O(L + K C_{MLP} + C_{DR})$ where $C_{MLP}$ and $C_{DR}$ are the computational costs of the MLP and the DR respectively:
\begin{align}\label{eq:complex}
    C_{DR} &= LM + M(K + k (2L) + L) = M( K+2(k+1)L )\nonumber \\
    C_{MLP} &= 64N_c + 4 M N_c^2.
\end{align}
Note that~\cref{eq:complex} depicts the complexity of a single QEC module. In our architecture we use a stack of those each of which cause an added increase in the complexity proportional to the number of points downsampled.

Our weighted quaternion average relies upon a differentiable SVD.  While not increasing the theoretical computational complexity, when done naively, this operation can cause significant increase in runtime. Hence, we compute the SVD using CUDA kernels in a batch-wise manner. This batch-wise SVD makes it possible to average a large amount of quaternions with high efficiency. Note that we omit the computational aspects of LRF calculation as we consider it to be an input to our system and different LRFs exhibit different costs.

We have further conducted a runtime analysis in the \emph{3D Shape Classification} experiment on an Nvidia GeForce RTX 2080 Ti with the network configuration mentioned in Sec. 5.2 of the main paper. During training, each batch (where batch size $b = 8$) takes $0.226$s and $1939$M of GPU memory. During inference, processing each instance takes $0.036$s and consumes $1107$M of GPU memory.  
\begin{figure}[t]
    \centering
    \subfigure[]{\includegraphics[width=0.47\textwidth]{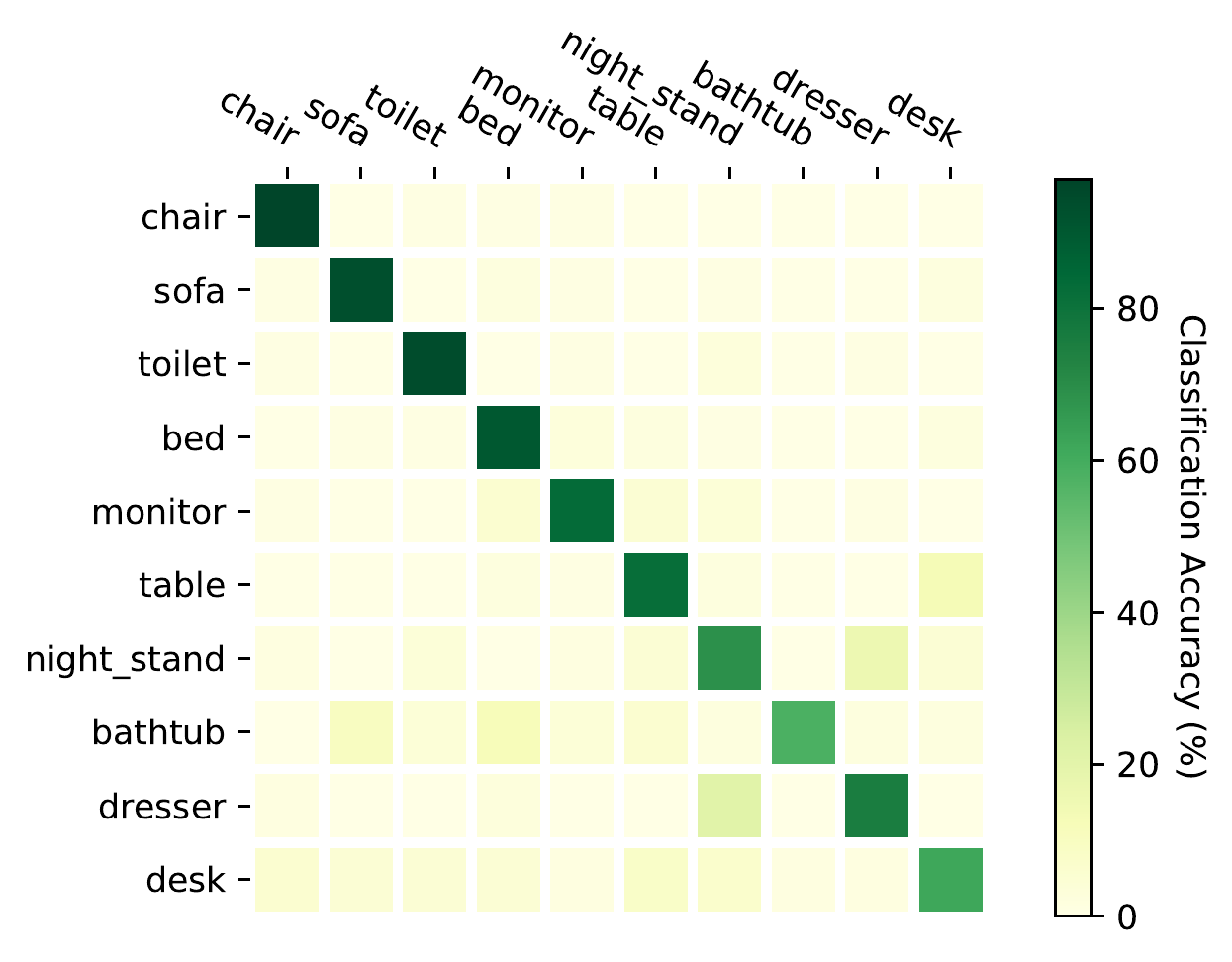}
    \label{fig:confusion}
    }\hfill
    \subfigure[]{\includegraphics[width=0.48\textwidth]{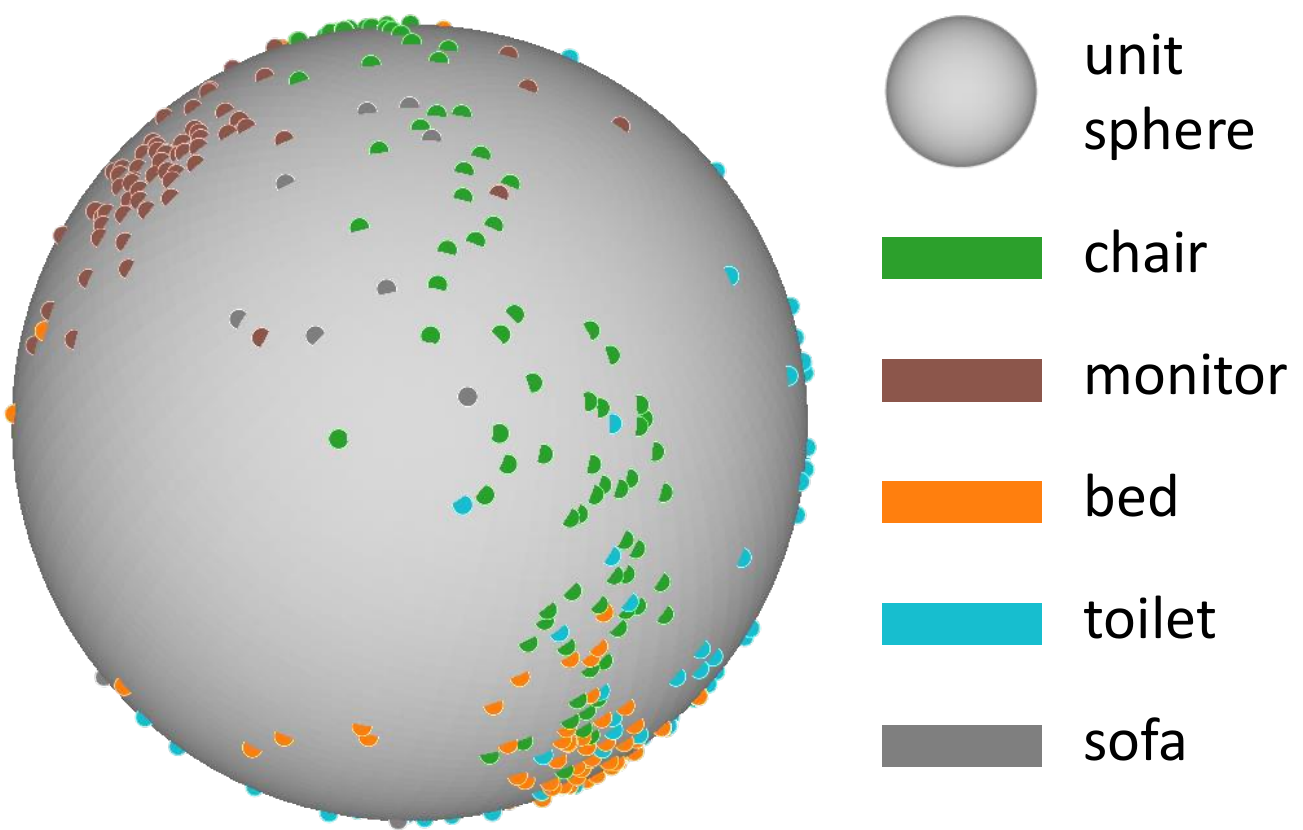}
    \label{fig:modelnetsphere}
    }
    \caption{\textbf{(a) } Confusion matrix on ModelNet10 for classification. \textbf{(b) } Distribution of initial poses per class.}
\end{figure}

Note that the use of LRFs helps us to restrict the rotation group to certain elements and thus we can use networks with significantly less parameters (as low as $0.44M$) compared to others as shown in Tab. 1 of the main paper. Number of parameters in our network depends upon the number of classes, e.g. for ModelNet10 we have $0.047M$ parameters.

\paragraph{\textbf{Quaternion ambiguity.}}
Quaternions of the northern and southern hemispheres represent the same exact rotation, hence one of them is \emph{redundant}. By mapping one hemisphere to the other, we sacrifice the closeness of the manifold. This could slightly distort the behavior of the linearization operator around the Ecuador. However, the rest of the operations such as geodesic distances respect such antipodality, as we consider the Quaternionic manifold and not the sphere. When the subset of operations we develop and the nature of local reference frames are concerned, we did not find this transformation to cause serious shortcomings.

\paragraph{\textbf{Performance on different shapes with same orientation.}}
The NR/NR scenario in Tab. 1 of the main paper involves classification on different shapes within a category without rotation, \eg chairs with different shapes. In addendum, we now provide in~\cref{fig:modelnetsphere} an additional insight into the pose distribution for all canonicalized objects within a class. To do so, we rotate the horizontal standard basis vector $\mathbf{e}_x = [1,0,0]$ using the predict quaternion (the most activated output capsule) and plot the resulting point on a unit sphere as shown in~\cref{fig:modelnetsphere}. A qualitative observation reveals that for all five non-symmetric classes, the poses of all the instances within a class would form a cluster. This roughly holds across all classes and indicates that the relative pose information is consistent within the classes. On the other hand, objects with symmetries form multiple clusters. 

\section{Our Siamese Architecture}
For estimation of the relative pose with supervision, we benefit from a Siamese variation of our network. In this case, latent capsule representations of two point sets $\X$ and $\Y$ jointly contribute to the pose regression as shown in~\cref{fig:qsiamese}.

We show additional results from the computation of local reference frames and the multi-channel capsules deduced from our network in~\cref{fig:LRFsAppendix}.

\insertimageStar{1}{LRFs_appendix_cropped.pdf}{Additional intermediate results on car (first row) and chair (second row) objects. This figure supplements Fig. 1(a) of the main paper.}{fig:LRFsAppendix}{t}

\section{Additional Details on Evaluations}
\paragraph{\textbf{Details on the evaluation protocol.}} For Modelnet40 dataset used in Tab. 1, we stick to the official split with 9,843 shapes for training and 2,468 different shapes for testing. For rotation estimation in Tab. 2, we again used the official Modelenet10 dataset split with 3991 for training and 908 shapes for testing. 3D point clouds (10K points) are randomly sampled from the mesh surfaces of each shape~\cite{qi2017pointnet,qi2017pointnet++}.
The objects in training and testing dataset are different, but they are from the same categories so that they can be oriented meaningfully.  
During training, we did not augment the dataset with random rotations. All the shapes are trained with single orientation (well-aligned). We call this \textit{trained with NR}. During testing, we randomly generate multiple arbitrary $SO(3)$ rotations for each shape and evaluate the average performance for all the rotations. This is called \textit{test with AR}. This protocol is used in both our algorithms and the baselines.

\insertimageStar{1}{more_shape_align.pdf}{Additional pairwise shape alignment on more categories in Modelnet10 dataset. We do not perform any ICP and the transformations that align the two point clouds are direct results of the forward pass of our Siamese network.}{fig:ShapeAlignAppendix}{t}


\paragraph{\textbf{Confusion of classification in ModelNet.}}
To provide additional insight into how our activation features perform, we now report the confusion matrix in the task of classification on the all the objects of ModelNet10. Unique to our algorithm, the classification and rotation estimation reinforces one another. As seen from~\cref{fig:confusion} on the right, the first five categories that exhibit less rotational symmetry has the higher classification accuracy than their rotationally symmetric counterparts. 

\paragraph{\textbf{Distribution of errors reported in Tab. 2.}}
We now provide more details on the errors attained by our algorithm as well as the state of the art. To this end, we report, in~\cref{fig:PoseEvalBars} the histogram of errors that fall within quantized ranges of orientation errors. It is noticeable that our Siamese architecture behaves best in terms of estimating the objects rotation. For completeness, we also included the results of the variants presented in our ablation studies: Ours-2kLRF, Ours-1kLRF. They evaluate the model on the re-calculated LRFs in order to show the robustness towards to various point densities. We have also modified IT-Net and PointNetLK only to predict rotation because the original works predict both rotations and translations. Finally, note here that we do not use data augmentation for training our networks (see AR), while both for PointNetLK and for IT-Net we do use augmentation.

\insertimageStar{0.95}{pose_eva_bar_table.pdf}{Cumulative error histograms of rotation estimation on ModelNet10. Each row ($<\theta^\circ$) of this extended table shows the percentage of shapes that have rotation error less than $\theta$. The colors of the bars correspond to the rows they reside in. The higher the errors are contained in the first bins (light blue) the better. Vice versa, the more the errors are clustered toward the $60^\circ$ the worse the performance of the method.}{fig:PoseEvalBars}{ht}



\end{document}